\def\eg{\emph{e.g.}}   
\def\ie{\emph{i.e.}}   
\def\cf{\emph{cf.}}     
\newcommand{\mr}[2]{\multirow{#1}{*}{#2}}
\newcommand{\mc}[3]{\multicolumn{#1}{#2}{#3}}
\renewcommand{\vec}[1]{\bm{#1}}
\newcommand{\mat}[1]{\bm{#1}}
\newcommand{\acc}[2]{#1\textcolor{black!70!white}{\scriptsize{$\pm$#2}}}
\newtheorem{theorem}{Theorem}
\newtheorem{proposition}[theorem]{Proposition}
\newtheorem{lemma}[theorem]{Lemma}
\definecolor{orange}{RGB}{254,128,41}
\definecolor{blue}{RGB}{10,153,201}
\definecolor{green}{RGB}{5,100,18}
\definecolor{codebg}{rgb}{0.96,0.96,0.96}
\definecolor{codebg1}{rgb}{1.0,0.8509803922,0.7490196078}
\definecolor{codebg2}{rgb}{0.7607843137,0.9411764706,0.7843137255}
\definecolor{codered}{rgb}{0.8431372549,0.2274509804,0.2862745098}
\definecolor{codeorange}{rgb}{0.8901960784,0.3843137255,0.03529411765}
\definecolor{codepurple}{rgb}{0.4352941176,0.2588235294,0.7568627451}
\lstdefinestyle{github}{
  language=Python,
  basicstyle=\ttfamily\footnotesize,
  backgroundcolor=\color{codebg},
  keywordstyle=\color{codered}\bfseries,
  emph=[1]{Module,ModuleList,GNN,GCNConv,ScalableGNN},
  emphstyle=[1]{\color{codeorange}\bfseries},
  emph=[2]{append,relu,push_and_pull},
  emphstyle=[2]{\color{codepurple}},
  emph=[3]{__init__,forward},
  emphstyle=[3]{\color{codepurple}\bfseries},
  emph=[4]{w},
  emphstyle=[4]{\color{codebg}},
  captionpos=b,
}
\icmltitlerunning{GNNAutoScale: Scalable and Expressive Graph Neural Networks via Historical Embeddings}
\begin{document}

\twocolumn[
\icmltitle{GNNAutoScale: Scalable and Expressive Graph Neural Networks\\via Historical Embeddings}

\begin{icmlauthorlist}
\icmlauthor{Matthias Fey}{dortmund}
\icmlauthor{Jan Eric Lenssen}{dortmund}
\icmlauthor{Frank Weichert}{dortmund}
\icmlauthor{Jure Leskovec}{stanford}
\end{icmlauthorlist}

\icmlaffiliation{dortmund}{Department of Computer Science, TU Dortmund University}
\icmlaffiliation{stanford}{Department of Computer Science, Stanford University}

\icmlcorrespondingauthor{Matthias Fey}{matthias.fey@udo.edu}

\icmlkeywords{Machine Learning, Deep Learning, Graph Neural Networks, ICML}

\vskip 0.3in
]

\printAffiliationsAndNotice{}

\begin{abstract}
  We present \emph{\underline{G}NN\underline{A}uto\underline{S}cale} (GAS), a framework for scaling arbitrary message-passing GNNs to large graphs.
  GAS prunes entire sub-trees of the computation graph by utilizing historical embeddings from prior training iterations,
  leading to constant GPU memory consumption in respect to input node size without dropping any data.
  While existing solutions weaken the expressive power of message passing due to sub-sampling of edges or non-trainable propagations, our approach is provably able to maintain the expressive power of the original GNN.\@
  We achieve this by providing approximation error bounds of historical embeddings and show how to tighten them in practice.
  Empirically, we show that the practical realization of our framework, \emph{PyGAS}, an easy-to-use extension for \textsc{PyTorch Geometric}, is both fast and memory-efficient, learns expressive node representations, closely resembles the performance of their non-scaling counterparts, and reaches state-of-the-art performance on large-scale graphs.
\end{abstract}

\section{Introduction}%
\label{sec:introduction}

\emph{Graph Neural Networks} (GNNs) capture local graph structure and feature information in a trainable fashion to derive powerful node representations suitable for a given task at hand \citep{Hamilton/2020,Ma/Yang/2020}.
As such, numerous GNNs have been proposed in the past that integrate ideas such as maximal expressiveness \citep{Xu/etal/2019}, anisotropy and attention \citep{Velickovic/etal/2018}, non-linearities \citep{Wang/etal/2019}, or multiple aggregations \citep{Corso/etal/2020} into their message passing formulation.
However, one of the challenges that have so far precluded their wide adoption in industrial and social applications is the difficulty to scale them to large graphs \citep{Frasca/etal/2020}.

While the full-gradient in a GNN is straightforward to compute, assuming one has access to \emph{all} hidden node embeddings in \emph{all} layers, this is not feasible in large-scale graphs due to GPU memory limitations \citep{Ma/Yang/2020}.
Therefore, it is desirable to approximate its full-batch gradient stochastically
by considering only a mini-batch $\mathcal{B} \subseteq \mathcal{V}$ of nodes for loss computation.
However, this stochastic gradient is still expensive to obtain due to the exponentially increasing dependency of nodes over layers; a phenomenon framed as \emph{neighbor explosion} \citep{Hamilton/etal/2017}.
Due to neighbor explosion and since the whole computation graph needs to be stored on the GPU, deeper architectures can not be applied to large graphs.
Therefore, a scalable solution needs to make the memory consumption constant or sub-linear in respect to the number of input nodes. 

Recent works aim to alleviate this problem by proposing various sampling techniques based on the concept of dropping edges \citep{Ma/Yang/2020,Rong/etal/2020}:
\emph{Node-wise sampling} \citep{Hamilton/etal/2017,Chen/etal/2018,Markowitz/etal/2021} recursively samples a fixed number of 1-hop neighbors;
\emph{Layer-wise sampling} techniques independently sample nodes for each layer, leading to a constant sample size in each layer \citep{Chen/etal/2018b,Zou/etal/2019,Huang/etal/2018};
In \emph{subgraph sampling} \citep{Chiang/etal/2019,Zeng/etal/2020a,Zeng/etal/2020b}, a full GNN is run on an entire subgraph $\mathcal{G}[\mathcal{B}]$ induced by a sampled batch of nodes $\mathcal{B} \subseteq \mathcal{V}$.
These techniques get rid of the neighbor explosion problem by sampling the graph but may fail to preserve the edges that present a meaningful topological structure.
Further, existing approaches are either still restricted to shallow networks, non-exchangeable GNN operators or operators with reduced expressiveness.
In particular, they consider only specific GNN operators and it is an open question whether these techniques can be successfully applied to the wide range of GNN architectures available \citep{Velickovic/etal/2018,Xu/etal/2019,Corso/etal/2020,Chen/etal/2020}.
Another line of work is based on the idea of decoupling propagations from predictions, either as a pre- \citep{Wu/etal/2019,Klicpera/etal/2019a,Frasca/etal/2020,Yu/etal/2020} or post-processing step \citep{Huang/etal/2021}.
While this scheme enjoys fast training and inference time, it cannot be applied to any GNN, in particular because the propagation is non-trainable, and therefore reduces model expressiveness.
A different scalability technique is based on the idea of training each GNN layer in isolation \citep{You/etal/2020}.
While this scheme resolves the neighbor explosion problem and accounts for all edges, it cannot infer complex interactions across consecutive layers.

\begin{figure*}[t]
  \centering
  \hfill{}
  \begin{subfigure}[t]{0.28\textwidth}
    \centering
    {\includegraphics[height=4cm]{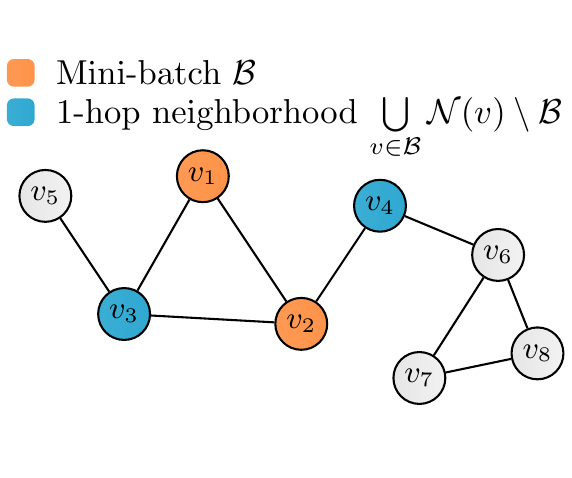}}
    \caption{Mini-batch selection}
  \end{subfigure}
  \hfill{}
  \begin{subfigure}[t]{0.32\textwidth}
    \centering
    {\includegraphics[height=4cm]{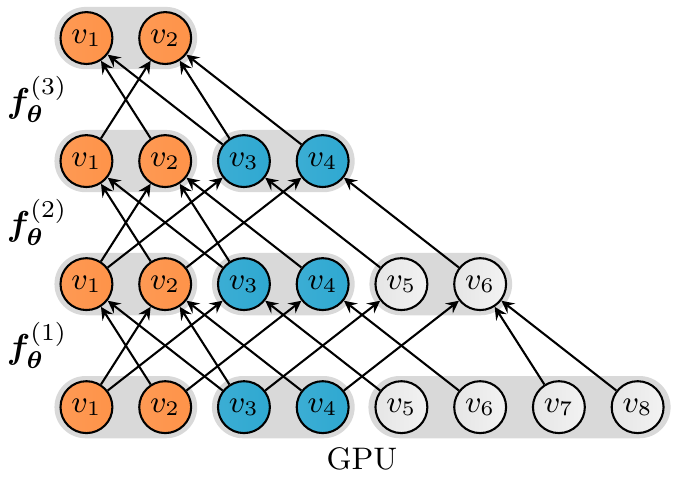}}
    \caption{Original computation graph}\label{fig:original}
  \end{subfigure}
  \hfill{}
  \begin{subfigure}[t]{0.3\textwidth}
    \centering
    {\includegraphics[height=4cm]{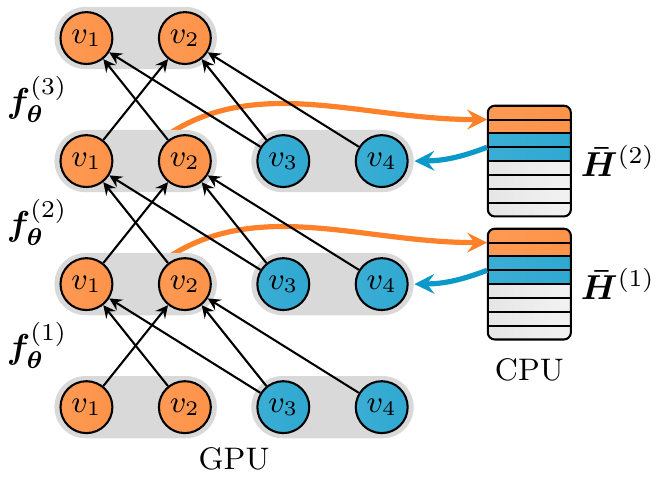}}
    \caption{GAS computation graph}\label{fig:histories}
  \end{subfigure}
  \hfill{}
  \caption{%
    \textbf{Mini-batch processing of GNNs with historical embeddings.}
    \textcolor{orange}{$\blacksquare$} denotes the nodes in the current mini-batch and \textcolor{blue}{$\blacksquare$} represents their direct 1-hop neighbors.
    For a given mini-batch (a), GPU memory and computation costs exponentially increase with GNN depth (b).
    The usage of historical embeddings avoids this problem as it allows to \emph{prune} entire sub-trees of the computation graph, which leads to constant GPU memory consumption in respect to input node size (c).
    Here, nodes in the current mini-batch \emph{push} their updated embeddings to the history $\mat{\bar{H}}^{(\ell)}$, while their direct neighbors \emph{pull} their most recent historical embeddings from $\mat{\bar{H}}^{(\ell)}$ for further processing.
  }\label{fig:cv}
\end{figure*}

Here, we propose the \emph{\underline{G}NN\underline{A}uto\underline{S}cale} (GAS) framework that disentangles the scalability aspect of GNNs from their underlying message passing implementation.
GAS revisits and generalizes the idea of \emph{historical} embeddings \citep{Chen/etal/2018}, which are defined as node embeddings acquired in previous iterations of training, \cf~Figure~\ref{fig:cv}.
For a given mini-batch of nodes, GAS prunes the GNN computation graph so that only nodes inside the current mini-batch and their direct 1-hop neighbors are retained, \emph{independent} of GNN depth.
Historical embeddings act as an offline storage and are used to accurately fill in the inter-dependency information of out-of-mini-batch nodes, \cf~Figure~\ref{fig:histories}.
Through constant memory consumption in respect to input node size, GAS is able to scale the training of GNNs to large graphs, while still accounting for \emph{all} available neighborhood information.

We show that approximation errors induced by historical information are solely caused by the staleness of the history and the Lipschitz continuity of the learned function, and propose solutions for tightening the proven bounds in practice.
Furthermore, we connect scalability with expressiveness and theoretically show under which conditions historical embeddings allow to learn expressive node representations on large graphs.
As a result, GAS is the first scalable solution that is able to keep the existing expressivity properties of the used GNN, which exist for a wide range of models \citep{Xu/etal/2019,Morris/etal/2019,Corso/etal/2020}.

We implement our framework practically as \emph{PyGAS}\footnote{\url{https://github.com/rusty1s/pyg\_autoscale}}, an extension for the \textsc{PyTorch Geometric} library \citep{Fey/Lenssen/2019}, which makes it easy to convert common and custom GNN models into their scalable variants and to apply them to large-scale graphs.
Experiments show that GNNs utilizing GAS achieve the same performances as their (non-scalable) full-batch equivalents (while requiring orders of magnitude less GPU memory), and are able to learn expressive node representations.
Furthermore, GAS allows the application of expressive and hard-to-scale-up models on large graphs, leading to state-of-the-art results on several large-scale graph benchmark datasets.

\section{Scalable GNNs via Historical Embeddings}%
\label{sec:analysis_of_scaling_gnns_up_via_historical_activations}

\paragraph{Background.}

Let $\mathcal{G} = (\mathcal{V}, \mathcal{E})$ or $\mat{A} \in {\{ 0, 1 \}}^{|\mathcal{V}| \times |\mathcal{V}|}$ denote a \emph{graph} with node feature vectors $\vec{x}_v$ for all $v \in \mathcal{V}$.
In this work, we are mostly interested in the task of \emph{node classification}, where each node $v \in \mathcal{V}$ is associated with a label $y_v$, and the goal is to learn a representation $\vec{h}_v$ from which $y_v$ can be easily predicted.
To derive such a representation, GNNs follow a \emph{neural message passing scheme} \citep{Gilmer/etal/2017}.
Formally, the $(\ell + 1)$-th layer of a GNN is defined as (omitting edge features for simplicity) 
\begin{equation}
  \begin{aligned}
    \resizebox{0.105\hsize}{!}{$\vec{h}^{(\ell + 1)}_v$} &
    \resizebox{0.525\hsize}{!}{
      $= \vec{f}^{(\ell+1)}_{\mat{\theta}} \hspace{-2pt} \left( \vec{h}_v^{(\ell)}, {\left\{ \hspace{-5pt} \left\{ \vec{h}_w^{(\ell)} \hspace{-2pt} \right\} \hspace{-5pt} \right\}}_{w \in \mathcal{N}(v)} \right)$}\\
  & \resizebox{0.8\hsize}{!}{
    $= \textsc{Update}^{(\ell+1)}_{\mat{\theta}} \hspace{-2pt} \left( \vec{h}^{(\ell)}_v, \,\bigoplus\limits_{\mathclap{w \in \mathcal{N}(v)}}~
     \textsc{Msg}^{(\ell+1)}_{\mat{\theta}} \big( \vec{h}_w^{(\ell)}, \vec{h}_v^{(\ell)} \big) \right)$}
  \end{aligned}
  \label{eq:gnn}
\end{equation}
where $\vec{h}^{(\ell)}_v$ represents the embedding of node $v$ obtained in layer $\ell$ and $\mathcal{N}(v)$ defines the neighborhood set of $v$.
We initialize $\vec{h}^{(0)}_v = \vec{x}_v$.
Here, $\vec{f}_{\mat{\theta}}^{(\ell + 1)}$ operates on \emph{multisets} $\{ \hspace{-3pt} \{ \ldots \} \hspace{-3pt} \}$ and
can be decomposed into differentiable $\textsc{Message}^{(\ell)}_{\mat{\theta}}$ and $\textsc{Update}^{(\ell)}_{\mat{\theta}}$ functions parametrized by weights $\vec{\theta}$, as well as permutation-invariant aggregation functions $\bigoplus$, \eg~taking the sum, mean or maximum of features \citep{Fey/Lenssen/2019,Gilmer/etal/2017,Qi/etal/2017,Wang/etal/2019,Xu/etal/2019,Kipf/Welling/2017,Velickovic/etal/2018,Hamilton/etal/2017,Klicpera/etal/2019a,Chen/etal/2020,Xu/etal/2018}.
Our following scalability framework is based on the general message passing formulation given in Equation~\eqref{eq:gnn} and thus is applicable to this wide range of different GNN operators.

\paragraph{Historical Embeddings.}%
\label{par:our_approach.}

Let $\vec{h}^{(\ell)}_v$ denote the node embedding in layer $\ell$ of a node $v \in \mathcal{B}$ in a mini-batch $\mathcal{B} \subseteq \mathcal{V}$.
For the general message scheme given in Equation~\eqref{eq:gnn}, the execution of $\vec{f}^{(\ell+1)}_{\mat{\theta}}$ can be formulated as:
\begin{equation}
\begin{aligned}
  &\resizebox{0.64\hsize}{!}{
  $\vec{h}_v^{(\ell+1)} = \vec{f}^{(\ell+1)}_{\mat{\theta}} \Big( \vec{h}_v^{(\ell)}, {\left\{ \hspace{-5pt} \left\{ \vec{h}_w^{(\ell)} \right\} \hspace{-5pt} \right\}}_{w \in \mathcal{N}(v)} \Big)$
  }\\
  &\resizebox{0.9\hsize}{!}{
  $= \vec{f}^{(\ell+1)}_{\mat{\theta}} \Big( \vec{h}_v^{(\ell)}, {\left\{ \hspace{-5pt} \left\{ \vec{h}_w^{(\ell)} \right\} \hspace{-5pt} \right\}}_{w \in \mathcal{N}(v) \cap \mathcal{B}} \cup {\left\{ \hspace{-5pt} \left\{ \vec{h}_w^{(\ell)} \right\} \hspace{-5pt} \right\}}_{w \in \mathcal{N}(v) \setminus \mathcal{B}} \, \Big)$
  }\\
  &\resizebox{0.9\hsize}{!}{
    $\approx \vec{f}^{(\ell+1)}_{\mat{\theta}} \Big( \vec{h}_v^{(\ell)}, {\left\{ \hspace{-5pt} \left\{ \vec{h}_w^{(\ell)} \right\} \hspace{-5pt} \right\}}_{w \in \mathcal{N}(v) \cap \mathcal{B}} \cup \hspace{-4pt} \underbrace{{\left\{ \hspace{-5pt} \left\{ \vec{\bar{h}}_w^{(\ell)} \right\} \hspace{-5pt} \right\}}_{w \in \mathcal{N}(v) \setminus \mathcal{B}}}_{\textrm{\footnotesize Historical embeddings}} \hspace{-4pt} \Big)$
  }\\
\end{aligned}
\label{eq:gnn_approx}
\end{equation}
Here, we separate the neighborhood information of the multiset into \emph{two} parts: \textbf{(1)} the local information of neighbors $\mathcal{N}(v)$ which are part of the current mini-batch $\mathcal{B}$, and \textbf{(2)} the information of neighbors which are not included in the current mini-batch.
For out-of-mini-batch nodes, we approximate their embeddings via historical embeddings acquired in previous iterations of training \citep{Chen/etal/2018}, denoted by $\vec{\bar{h}}_w^{(\ell)}$.
After each step of training, the newly computed embeddings $\vec{h}^{(\ell + 1)}_v$ are pushed to the history and serve as historical embeddings $\vec{\bar{h}}_w^{(\ell + 1)}$ in future iterations.
The separation of in-mini-batch nodes and out-of-mini-batch nodes, and their approximation via historical embeddings represent the foundation of our GAS framework.

A high-level illustration of its computation flow is visualized in Figure~\ref{fig:cv}.
Figure~\ref{fig:original} shows the original data flow without historical embeddings.
The required GPU memory increases as the model gets deeper.
After a few layers, embeddings for the entire input graph need to be stored, even if only a mini-batch of nodes is considered for loss computation.
In contrast, historical embeddings eliminate this problem by approximating entire sub-trees of the computation graph, \cf~Figure~\ref{fig:histories}.
The required historical embeddings are pulled from an off\-line storage, instead of being re-computed in each iteration, which keeps the required information for each batch local.
For a single batch $\mathcal{B} \subseteq \mathcal{V}$, the GPU memory footprint for one training step is given by $\mathcal{O}(|\bigcup_{v \in \mathcal{B}} \mathcal{N}(v) \cup \{ v \}| \cdot L)$ and thus only scales linearly with the number of layers $L$.
The majority of data (the histories) can be stored in RAM or hard drive storage rather than GPU memory.

In the following, we are going to use $\vec{\tilde{h}}_v^{(\ell)}$ to denote embeddings estimated via GAS (line 3 of Equation~\eqref{eq:gnn_approx}) to differentiate them from the exact embeddings obtained without historical approximation (line 1 of Equation~\eqref{eq:gnn_approx}).
In contrast to existing scaling solutions based on sub-sampling edges, the usage of historical embeddings as utilized in GAS provides the following additional advantages:

\textbf{(1) GAS trains over all the data:} In GAS, a GNN will make use of all available graph information, \ie~\emph{no} edges are dropped, which results in lower variance and more accurate estimations (since $\| \vec{\bar{h}}^{(\ell)}_v - \vec{h}_v^{(\ell)} \| \ll \| \vec{h}_v^{(\ell)} \|$).
Importantly, for a single epoch and layer, each edge is still only processed once, putting its time complexity $\mathcal{O}(|\mathcal{E}|)$ on par with its full-batch counterpart.
Notably, more accurate estimations will further strengthen gradient estimation during backpropagation.
Specifically, the model parameters will be updated based on the node embeddings of \emph{all} neighbors since $\partial \vec{\tilde{h}}^{(\ell+1)}_v \hspace{-3pt} / \partial \mat{\theta}$ also depends on $\{ \hspace{-3pt} \{ \vec{\bar{h}}_w^{(\ell)} \colon w \in \mathcal{N}(v) \setminus \mathcal{B} \} \hspace{-3pt} \}$.

\textbf{(2) GAS enables constant inference time complexity:} The time complexity of model inference is reduced to a constant factor, since we can directly use the historical embeddings of the last layer to derive predictions for test nodes.

\textbf{(3) GAS is simple to implement:} Our scheme does not need to maintain recursive layer-wise computation graphs, which makes its overall implementation straightforward and comparable to full-batch training.
Only minor modifications are required  to \emph{pull} information from and \emph{push} information to the histories, \cf~our training algorithm in the appendix.

\textbf{(4) GAS provides theoretical guarantees:}
In particular, if the model weights are kept fixed, $\vec{\tilde{h}}_v^{(\ell)}$ eventually equals $\vec{h}^{(\ell)}_v$ after a fixed amount of iterations \citep{Chen/etal/2018}.

\section{Approximation Error and Expressiveness}%
\label{sec:analysis}

The advantages of utilizing historical embeddings $\vec{\bar{h}}_v^{(\ell)}$ to compute an approximation $\vec{\tilde{h}}_v^{(\ell)}$ of the exact embedding $\vec{h}_v^{(\ell)}$ come at the cost of an approximation error $\| \vec{\tilde{h}}_v^{(\ell)} - \vec{h}_v^{(\ell)} \|$, which can be decomposed into two sources of variance:
\textbf{(1)} The \emph{closeness} of estimated inputs to their exact values, \ie~$\| \vec{\tilde{h}}_v^{(\ell - 1)} - \vec{h}_v^{(\ell - 1)} \| \geq 0$, and \textbf{(2)} the \emph{staleness} of historical embeddings, \ie~$\| \vec{\bar{h}}_v^{(\ell - 1)} - \vec{\tilde{h}}^{(\ell - 1)}_v \| \geq 0$.
In the following, we show concrete bounds for this error, which can be then tightened using specific procedures.
Here, our analysis focuses on arbitrary $\vec{f}^{(\ell)}_{\mat{\theta}}$ GNN layers as described in Equation~\eqref{eq:gnn}, but we restrict both $\textsc{Message}^{(\ell)}_{\mat{\theta}}$ and $\textsc{Update}^{(\ell)}_{\mat{\theta}}$ to model $k$-Lipschitz continuous functions due to their potentially highly non-linear nature. Proofs of all lemmas and theorems can be found in the appendix.

\begin{lemma}\label{lemma1}
  Let $\textsc{Message}^{(\ell)}_{\mat{\theta}}$ and $\textsc{Update}^{(\ell)}_{\mat{\theta}}$ be Lipschitz continuous functions with Lipschitz constants $k_1$ and $k_2$, respectively.
  If, for all $v \in \mathcal{V}$, the inputs are close to the exact input, \ie~$\| \vec{\tilde{h}}^{(\ell - 1)}_{v} - \vec{h}^{(\ell - 1)}_{v} \| \leq \delta$, and the historical embeddings do not run too stale, \ie~$\| \vec{\bar{h}}^{(\ell - 1)}_v - \vec{\tilde{h}}^{(\ell - 1)}_v \| \leq \epsilon$, then the output error is bounded by
  \begin{equation*}
    \| \vec{\tilde{h}}^{(\ell)}_v - \vec{h}^{(\ell)}_v \| \leq \delta\,k_2 + (\delta + \epsilon)\,k_1\,k_2\,|\mathcal{N}(v)|.
  \end{equation*}
\end{lemma}

Due to the behavior of Lipschitz constants in a series of function compositions, we obtain an upper bound that is dependent on $k_1$, $k_2$ and $|\mathcal{N}(v)|$, as well as dependent on the errors $\delta$ and $\epsilon$ of the inputs.
Interestingly, sum aggregation, the most expressive aggregation function \citep{Xu/etal/2019}, introduces a factor of $|\mathcal{N}(v)|$ to the upper bound, while we can obtain a much tighter upper bound for mean or max aggregation, \cf~its proof.
Next, we take a look at the final output error produced by a $L$-layered GNN:\@

\vspace{0.25cm}
\begin{theorem}\label{theorem1}
  Let $\vec{f}^{(L)}_{\mat{\theta}}$ be a $L$-layered GNN, containing only Lipschitz continuous $\textsc{Message}^{(\ell)}_{\mat{\theta}}$ and $\textsc{Update}^{(\ell)}_{\mat{\theta}}$ functions with Lipschitz constants $k_1$ and $k_2$, respectively.
  If, for all $v \in \mathcal{V}$ and all $\ell \in \{1, \ldots, L-1\}$, the historical embeddings do not run too stale, \ie~$\| \vec{\bar{h}}^{(\ell)}_v - \vec{\tilde{h}}^{(\ell)}_v \| \leq \epsilon^{(\ell)}$, then the final output error is bounded by
  \vspace{0.25cm}
  \begin{equation*}
    \| \vec{\tilde{h}}_{v,j}^{(L)} - \vec{h}_{v,j}^{(L)} \| \leq \sum_{\ell = 1}^{L-1} \epsilon^{(\ell)} \, k_1^{L - \ell} \, k_2^{L - \ell} \, {|\mathcal{N}(v)|}^{L - \ell}.
  \end{equation*}
\end{theorem}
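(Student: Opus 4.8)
The plan is to prove Theorem~\ref{theorem1} by induction on the layer index, using Lemma~\ref{lemma1} as the engine for the inductive step. Write $\delta^{(\ell)} := \max_{u \in \mathcal{V}} \| \vec{\tilde{h}}^{(\ell)}_u - \vec{h}^{(\ell)}_u \|$ for the worst-case closeness error after $\ell$ layers. The base case is immediate: since $\vec{\tilde{h}}^{(0)}_u = \vec{x}_u = \vec{h}^{(0)}_u$ we have $\delta^{(0)} = 0$, and because the raw input features are never stored as stale histories we may also take $\epsilon^{(0)} = 0$. These two vanishing quantities are exactly what lets the sum in the theorem start at $\ell = 1$ rather than at $\ell = 0$.

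For the inductive step I would invoke Lemma~\ref{lemma1} at layer $\ell$ with the uniform hypotheses $\delta := \delta^{(\ell-1)}$ and $\epsilon := \epsilon^{(\ell-1)}$, which (reading $|\mathcal{N}(v)|$ as the relevant neighborhood-size bound) gives
\begin{equation*}
  \delta^{(\ell)} \;\le\; \delta^{(\ell-1)}\,k_2 \;+\; \big(\delta^{(\ell-1)} + \epsilon^{(\ell-1)}\big)\,k_1\,k_2\,|\mathcal{N}(v)| .
\end{equation*}
Reading off the per-step amplification factor $k_1 k_2 |\mathcal{N}(v)|$ --- one $k_1$ from message composition, a factor $|\mathcal{N}(v)|$ from summing messages over the neighborhood, and one $k_2$ from the update --- the picture is that an error of magnitude $\epsilon^{(\ell)}$ injected at layer $\ell$ is propagated through the remaining $L - \ell$ layers and hence amplified by at most ${(k_1 k_2 |\mathcal{N}(v)|)}^{L-\ell}$. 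Concretely, I would establish by induction on $\ell$ the auxiliary inequality $\delta^{(\ell)} \le \sum_{i=1}^{\ell-1} \epsilon^{(i)} {(k_1 k_2 |\mathcal{N}(v)|)}^{\ell-i}$: substituting this hypothesis into the recurrence above and regrouping, the fresh term $\epsilon^{(\ell-1)} k_1 k_2 |\mathcal{N}(v)|$ becomes the new highest-degree summand while every previously accumulated summand acquires one more factor of $k_1 k_2 |\mathcal{N}(v)|$, reproducing the claimed sum at level $\ell$; evaluating at $\ell = L$ then yields the theorem.

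The step I expect to be the main obstacle is the stray $\delta^{(\ell-1)} k_2$ contribution coming from the self-loop of the update function: it is the one term in the recurrence that does not already carry the full amplification factor $k_1 k_2 |\mathcal{N}(v)|$, so it must be folded into the neighbor term before the geometric-type unrolling will close. In any nondegenerate regime ($k_1, k_2 \ge 1$ and $|\mathcal{N}(v)| \ge 1$) it is dominated by $\delta^{(\ell-1)} k_1 k_2 |\mathcal{N}(v)|$ and can be absorbed there, but keeping careful track of which sub-dominant pieces are merged at each layer is the delicate part of the bookkeeping. It is also precisely this absorption that leaves sum aggregation responsible for the $|\mathcal{N}(v)|^{L-\ell}$ growth, whereas --- just as in Lemma~\ref{lemma1} --- mean or max aggregation would replace the $|\mathcal{N}(v)|$ factors by $1$ and collapse the bound to $\sum_{\ell} \epsilon^{(\ell)}\, k_1^{L-\ell}\, k_2^{L-\ell}$.
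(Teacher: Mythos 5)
Your proposal is correct and takes essentially the same route as the paper: both unroll the Lemma~\ref{lemma1} recurrence $\delta^{(\ell)} = \delta^{(\ell-1)}k_2 + (\delta^{(\ell-1)} + \epsilon^{(\ell-1)})\,k_1 k_2\,|\mathcal{N}(v)|$ starting from the exact first layer $\delta^{(1)} = 0$ to obtain the stated sum. The stray $\delta^{(\ell-1)}k_2$ term you flag is indeed the only delicate point --- the paper's proof silently drops it when unrolling, whereas your proposed absorption would inflate the per-layer factor to $2k_1k_2|\mathcal{N}(v)|$, so you inherit exactly the same minor looseness as the published argument.
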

Notably, this upper bound does not longer depend on $\| \vec{\tilde{h}}_v^{(\ell)} - \vec{h}_v^{(\ell)} \| \leq \delta^{(\ell)}$, and is instead solely conditioned on the staleness of histories $\| \vec{\bar{h}}_v^{(\ell)} - \vec{\tilde{h}}_v^{(\ell)} \| \leq \epsilon^{(\ell)}$.
However, it depends \emph{exponentially} on the Lipschitz constants $k_1$ and $k_2$ as well as $|\mathcal{N}(v)|$ with respect to the number of layers.
In particular, each additional layer introduces a less restrictive bound since the errors made in the first layers get immediately propagated to later ones, leading to potentially high inaccuracies for histories in deeper GNNs.
We will later propose solutions for tightening the proven bound in practice, allowing the application of GAS to deep and non-linear GNNs.
Furthermore, Theorem~\ref{theorem1} lets us immediately derive an upper error bound of gradients as well, \ie
\begin{equation*}
\| \nabla_{\mat{\theta}} \mathcal{L}(\vec{\tilde{h}}^{(L)}_v) - \nabla_{\mat{\theta}} \mathcal{L}(\vec{h}^{(L)}_v) \| \leq \lambda \| \vec{\tilde{h}}^{(L)}_v - \vec{h}^{(L)}_v \|
\end{equation*}
in case $\mathcal{L}$ is $\lambda$-Lipschitz continuous.
As such, GAS encourages low variance and bias in the learning signal as well.
However, parameters are not guaranteed to converge to the same optimum since we explicitely consider arbitrary GNNs solving non-convex problems \citep{Cong/etal/2020}.

It is well known that the most powerful GNNs adhere to the same representational power as the \emph{Weisfeiler-Lehman (WL) test} \citep{Weisfeiler/Lehman/1968} in distinguishing non-isomorphic structures, \ie~$\vec{h}_v^{(L)} \neq \vec{h}_w^{(L)}$ in case $c_v^{(L)} \neq c_w^{(L)}$ \citep{Xu/etal/2019,Morris/etal/2019}, where $c_v^{(L)}$ denotes a node's coloring after $L$ rounds of color refinement.
However, in order to leverage such expressiveness, a GNN needs to be able to reason about structural differences across neighborhoods directly \emph{during} training.
We now show that GNNs that scale by sampling edges are not capable of doing so:

\begin{proposition}\label{prop3}
  Let $\vec{f}^{(L)}_{\mat{\theta}} \colon \mathcal{V} \to \mathbb{R}^{d}$ be a $L$-layered GNN as expressive as the WL test in distinguishing the $L$-hop neighborhood around each node $v \in \mathcal{V}$.
  Then, there exists a graph $\mat{A} \in {\{0, 1 \}}^{|\mathcal{V}| \times |\mathcal{V}|}$ for which $\vec{f}^{(L)}_{\mat{\theta}}$ operating on a sampled variant $\mat{\tilde{A}}$, $\tilde{a}_{v,w} = \begin{cases} \frac{|\mathcal{N}(v)|}{|\mathcal{\tilde{N}}(v)|}, & \textrm{if } w \in \mathcal{\tilde{N}}(v) \\ 0, & \textrm{otherwise} \end{cases}$, produces a non-equivalent coloring, \ie~$\vec{\tilde{h}}^{(L)}_v \neq \vec{\tilde{h}}^{(L)}_w$ while $c_v^{(L)} = c_w^{(L)}$ for nodes $v, w \in \mathcal{V}$.
\end{proposition}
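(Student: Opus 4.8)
The plan is to reduce the claim to a combinatorial statement about $1$-WL colour refinement and then realise it with a tiny labelled graph. Because $\vec{f}^{(L)}_{\mat{\theta}}$ is as expressive as the WL test in distinguishing $L$-hop neighbourhoods, on the sampled graph $\mat{\tilde{A}}$ it must separate any pair of nodes whose depth-$L$ unfolding (computation) trees are non-isomorphic. Hence it suffices to construct an input graph $\mat{A}$ together with an admissible sample $\mat{\tilde{A}}$ of the prescribed reweighted form such that (i) $v$ and $w$ are WL-indistinguishable in $\mat{A}$, so $c^{(L)}_v = c^{(L)}_w$, yet (ii) the depth-$L$ unfolding trees of $v$ and $w$ in $\mat{\tilde{A}}$ are non-isomorphic.

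The main obstacle — and what dictates the construction — is that the reweighting $\tilde a_{v,w} = |\mathcal{N}(v)|/|\mathcal{\tilde{N}}(v)|$ is exactly the inverse-sampling-probability (Horvitz--Thompson type) correction that debiases the aggregation; when a retained neighbourhood is label-homogeneous this factor precisely cancels any change in degree. So the obvious idea — take a vertex-transitive graph and delete a single edge at $v$ to shrink its degree — fails: with uniform features the reweighted sampled GNN gives every node the same embedding. To succeed, the asymmetry between $v$ and $w$ must be carried by retained neighbours that are themselves WL-distinguishable, so that it survives the rescaling.

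Concretely, I take $\mathcal{G}$ to be the $4$-cycle $v-a-w-b-v$ with node features chosen so that $\vec{x}_v = \vec{x}_w$ but $\vec{x}_a \neq \vec{x}_b$. The transposition $v \leftrightarrow w$ fixing $a$ and $b$ is a feature-preserving automorphism of $\mathcal{G}$, and WL colours are automorphism-invariant, so $c^{(\ell)}_v = c^{(\ell)}_w$ for every $\ell$, in particular $c^{(L)}_v = c^{(L)}_w$. For the sample I retain exactly the perfect matching $\{v,a\},\{w,b\}$: $\mathcal{\tilde{N}}(v)=\{a\}$, $\mathcal{\tilde{N}}(w)=\{b\}$, $\mathcal{\tilde{N}}(a)=\{v\}$, $\mathcal{\tilde{N}}(b)=\{w\}$, none empty, and every surviving entry receives weight $\tilde a = 2/1 = 2$ in agreement with the formula in the statement. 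In $\mat{\tilde{A}}$ the depth-$L$ unfolding tree rooted at $v$ is an alternating label-path $\vec{x}_v,\vec{x}_a,\vec{x}_v,\vec{x}_a,\dots$, whereas the one rooted at $w$ alternates $\vec{x}_v,\vec{x}_b,\vec{x}_v,\vec{x}_b,\dots$; since $\vec{x}_a \neq \vec{x}_b$ they differ already at the first child level and are non-isomorphic. Equivalently, unrolling Equation~\eqref{eq:gnn} and using that a WL-maximal layer has injective $\textsc{Message}^{(\ell)}_{\mat{\theta}}$ and $\textsc{Update}^{(\ell)}_{\mat{\theta}}$ (and that scaling by the nonzero constant $2$ is injective), a short induction on $\ell$ gives $\vec{\tilde{h}}^{(\ell)}_v \neq \vec{\tilde{h}}^{(\ell)}_w$; hence $\vec{\tilde{h}}^{(L)}_v \neq \vec{\tilde{h}}^{(L)}_w$ while $c^{(L)}_v = c^{(L)}_w$, which is the assertion.

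Two loose ends remain. First, since $\mat{\tilde{A}}$ is edge-weighted one should pin down what ``WL on $\mat{\tilde{A}}$'' means, but as the retained-neighbour \emph{labels} $\vec{x}_a \neq \vec{x}_b$ already differ, the separation is insensitive to whether the GNN even reads the edge weights, and no rescaling can erase it. Second, if a connected example is wanted, one can attach an auxiliary subgraph to the matching at distance $>L$ from both $v$ and $w$, or embed the labelled $4$-cycle inside a larger graph with the same swap automorphism; neither alters the depth-$L$ unfolding trees of $v$ and $w$, so the argument is unchanged.
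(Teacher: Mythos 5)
Your proposal is correct and takes essentially the same route as the paper: the paper's proof simply exhibits a small colored graph and a subsample (in a figure) for which the layer-1 embeddings already disagree, $\vec{h}^{(1)}_{v_1} = \vec{h}^{(1)}_{v_4}$ but $\vec{\tilde{h}}^{(1)}_{v_1} \neq \vec{\tilde{h}}^{(1)}_{v_4}$, which is exactly your labelled $4$-cycle-plus-matching construction in spirit. Your write-up is more careful than the paper's (the automorphism argument for $c^{(L)}_v = c^{(L)}_w$, the observation that the Horvitz--Thompson reweighting cancels only for label-homogeneous neighbourhoods, and the induction propagating the layer-1 separation to layer $L$), but the underlying counterexample idea is the same.
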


While sampling strategies lose expressive power due to sub-sampling of edges, scalable GNNs based on historical embeddings are leveraging \emph{all} edges during neighborhood aggregation.
Therefore, a special interest lies in the question if historical-based GNNs are as expressive as their full-batch counterpart.
Here, a maximally powerful \emph{and} scalable GNN needs to fulfill the following two requirements: \textbf{(1)} It needs to be as expressive as the WL test in distinguishing non-isomorphic structures, and \textbf{(2)} it needs to account for the approximation error $\| \vec{\bar{h}}_v^{(\ell - 1)} - \vec{h}_v^{(\ell - 1)} \|$ induced by the usage of historical embeddings.
Since it is known that there exists a wide range of maximally powerful GNNs \citep{Xu/etal/2019,Morris/etal/2019,Corso/etal/2020}, we can restrict our analysis to the latter question.
Following upon \citet{Xu/etal/2019}, we focus on the case where input node features are from a countable set $\mathbb{P}^d \subset \mathbb{R}^d$ of bounded size:

\begin{lemma}\label{lemma2}
  Let $\{ \hspace{-3pt} \{ \vec{h}_v^{(\ell - 1)} \colon v \in \mathcal{V} \} \hspace{-3pt} \}$ be a countable multiset such that $\| \vec{h}_v^{(\ell - 1)} - \vec{h}_w^{(\ell - 1)} \| > 2 (\delta + \epsilon)$ for all $v,w \in \mathcal{V}$, $\vec{h}_v^{(\ell - 1)} \neq \vec{h}_w^{(\ell - 1)}$.
  If the inputs are close to the exact input, \ie~\mbox{$\| \vec{\tilde{h}}_v^{(\ell - 1)} - \vec{h}_v^{(\ell - 1)} \| \leq \delta$}, and the historical embeddings do not run too stale, \ie~\mbox{$\| \vec{\bar{h}}_v^{(\ell - 1)} - \vec{\tilde{h}}_v^{(\ell - 1)} \| \leq \epsilon$}, then there exist $\textsc{Message}^{(\ell)}_{\mat{\theta}}$ and $\textsc{Update}^{(\ell)}_{\mat{\theta}}$ functions, such that
  \begin{equation*}
    \| \vec{f}^{(\ell)}_{\mat{\theta}}(\vec{\tilde{h}}_v^{(\ell - 1)}) - \vec{f}^{(\ell)}_{\mat{\theta}}(\vec{h}_v^{(\ell - 1)}) \| \leq \delta + \epsilon
  \end{equation*}
  and
  \begin{equation*}
    \| \vec{f}^{(\ell)}_{\mat{\theta}}(\vec{h}_v^{(\ell - 1)}) - \vec{f}^{(\ell)}_{\mat{\theta}}(\vec{h}_w^{(\ell - 1)}) \| > 2(\delta + \epsilon + \lambda)
  \end{equation*}
  for all $v,w \in \mathcal{V}$, $\vec{h}_v^{(\ell - 1)} \neq \vec{h}_w^{(\ell - 1)}$ and all $\lambda > 0$.
\end{lemma}

Informally, Lemma~\ref{lemma2} tells us that if \textbf{(1)} exact input embeddings are sufficiently far apart from each other and \textbf{(2)} historical embeddings are sufficiently close to the exact embeddings, there exist historical-based GNN operators which can distinguish equal from non-equal inputs.
Key to the proof is that $(\delta+\epsilon)$-balls around exact inputs do not intersect each other and are therefore well separated.
Notably, we do not require $\vec{f}_{\mat{\theta}}^{(\ell)}$ to model strict injectivity since it is sufficient for $\vec{f}_{\mat{\theta}}^{(\ell)}$ to be $2(\delta + \epsilon)$-injective \citep{Seo/etal/2019}.

Following \citet{Xu/etal/2019}, one can leverage MLPs to model and learn such $\textsc{Message}$ and $\textsc{Update}$ functions due to the universal approximation theorem \citep{Hornik/etal/1989,Hornik/1991}.
However, the theory behind Lemma~\ref{lemma2} holds for any maximally powerful GNN operator.
Finally, we can use this insight to relate the expressiveness of scalable GNNs to the WL test color refinement procedure:

\begin{theorem}\label{theorem2}
  Let $\vec{f}^{(L)}_{\mat{\theta}}$ be a $L$-layered GNN in which all $\textsc{Message}^{(\ell)}_{\mat{\theta}}$ and $\textsc{Update}^{(\ell)}_{\mat{\theta}}$ functions fulfill the conditions of Lemma~\ref{lemma2}.
  Then, there exists a map $\phi \colon \mathbb{R}^d \to \Sigma$ so that $\phi(\vec{\tilde{h}}^{(L)}_v) = c^{(L)}_v$ for all $v \in \mathcal{V}$.
\end{theorem}

Theorem~\ref{theorem2} extends the insights of Lemma~\ref{lemma2} to multi-layered GNNs, and indicates that scalable GNNs using historical embeddings are able to distinguish non-isomorphic structures (that are distinguishable by the WL test) directly during training, which is what makes reasoning about structural properties possible.
It should be noted that recent proposals such as \textsc{DropEdge} \citep{Rong/etal/2020} are still applicable for data augmentation and message reduction.
However, through the given theorem, we disentangle scalability and expressiveness from regularization via edge dropping.

While sampling approaches lose expressiveness compared to their original counterparts (\cf~Proposition~\ref{prop3}), Theorem~\ref{theorem2} tells us that, in theory, there exist message passing functions that are as expressive as the WL test in distinguishing non-isomorphic structures while accounting for the effects of approximation in stored embeddings.
In practice, we have two degrees of freedom to tighten the upper bounds given by Lemma~\ref{lemma1} and Theorem~\ref{theorem1}, leading to a lower approximation error and higher expressiveness in return: \textbf{(1)} Minimizing the \emph{staleness} of historical embeddings, and \textbf{(2)} maximizing the \emph{closeness} of estimated inputs to their exact values by controlling the Lipschitz constants of $\textsc{Update}$ and $\textsc{Message}$ functions.
In what follows, we derive a list of procedures to achieve these goals:

\paragraph{Minimizing Inter-Connectivity Between Batches.}%
\label{par:minimizing_inter-connecivity_between_batches.}

As formulated in Equation~\eqref{eq:gnn_approx} in Section~\ref{sec:analysis_of_scaling_gnns_up_via_historical_activations}, the output embeddings of $\vec{f}_{\mat{\theta}}^{(\ell+1)}$ are exact if $|\bigcup_{v \in \mathcal{B}} \mathcal{N}(v) \cup \{ v \}| = |\mathcal{B}|$, \ie~all neighbors of nodes in $\mathcal{B}$ are as well part of $\mathcal{B}$.
However, in practice, this can only be guaranteed for full-batch GNNs.
Motivated by this observation, we aim to minimize the inter-connectivity between sampled mini-batches, \ie~$\min |\bigcup_{v \in \mathcal{B}} \mathcal{N}(v) \setminus \mathcal{B}|$, which minimizes history access, and increases closeness and reduces staleness in return.

Similar to \textsc{Cluster-GCN} \citep{Chiang/etal/2019}, we make use of graph clustering techniques, \eg, \textsc{Metis} \citep{Karypis/Kumar/1998,Dhillon/etal/2007}, to achieve this goal.
It aims to construct partitions over the nodes in a graph such that intra-links within clusters occur much more frequently than inter-links between different clusters.
Intuitively, this results in a high chance that neighbors of a node are located in the same cluster.
Notably, modern graph clustering methods are both fast and scalable with time complexities given by $\mathcal{O}(|\mathcal{E}|)$, and only need to be applied once, which leads to an unremarkable computational overhead in the pre-processing stage.
In general, we argue that the \textsc{Metis} clustering technique is highly scalable, as it is in the heart of many large-scale distributed graph storage layers such as \citep{Zhu/etal/2019,Zheng/etal/2020} that are known scale to billion-sized graphs.
Furthermore, the additional overhead in the pre-processing stage is quickly compensated by an acceleration of training, since the number of neighbors outside of $\mathcal{B}$ is heavily reduced, and pushing information to the histories now leads to contiguous memory transfers.

\paragraph{Enforcing Local Lipschitz Continuity.}%
\label{par:enforcing_local_lipschitz_continuity}

To guide our neural network in learning a function with controllable error, we can enforce its intermediate output layers $\vec{f}_{\mat{\theta}}^{(\ell)}$ to be invariant to small input perturbations.
In particular, following upon \citet{Usama/Chang/2018}, we found it useful to apply the auxiliary loss
\begin{equation}
  \mathcal{L}_{\textrm{reg}}^{(\ell)} = \| \vec{f}_{\mat{\theta}}^{(\ell)} (\vec{\tilde{h}}_v^{(\ell - 1)}) - \vec{f}_{\mat{\theta}}^{(\ell)} (\vec{\tilde{h}}_v^{(\ell - 1)} + \vec{\epsilon}) \|
\end{equation}
in highly non-linear message passing phases, \eg, in \textsc{GIN} \citep{Xu/etal/2019}.
Such regularization enforces equal outputs for small pertubations $\vec{\epsilon} \sim \mathcal{B}_{\delta}(\vec{0})$ inside closed balls of radius $\delta$.
Notably, we do not restrict $\textsc{Update}^{(\ell)}_{\mat{\theta}}$ and $\textsc{Message}^{(\ell)}_{\mat{\theta}}$ to separately model global $k$-Lipschitz continuous functions, but rather aim for local Lipschitz continuity at each $\vec{h}_v^{(\ell - 1)}$ for $\vec{f}^{(\ell)}_{\mat{\theta}}$ as a whole.
For other message passing GNNs, \eg, in \textsc{GCN} \citep{Kipf/Welling/2017}, $L_2$ regularization is usually sufficient to ensure closeness of historical embeddings.
Further, we found gradient clipping to be an effective method to restrict the parameters from changing too fast, regularizing history changes in return.

\section{Related Work}%
\label{sec:related_work}

Our GAS framework utilizes historical embeddings as an affordable approximation.
The idea of historical embeddings was originally introduced in \textsc{VR-GCN} \citep{Chen/etal/2018}.
\textsc{VR-GCN} aims to reduce the variance in estimation during neighbor sampling \citep{Hamilton/etal/2017}, and avoids the need to sample a large amount of neighbors in return.
\citet{Cong/etal/2020} further simplified this scheme into a \emph{one-shot sampling} scenario, where nodes no longer need to recursively explore neighborhoods in each layer.
However, these approaches consider only a specific GNN operator which prevent their application to the wide range of GNN architectures available.
Furthermore, they only consider shallow architectures and do not account for the increasing approximation error induced by deeper and expressive GNNs, which is well observable in practice, \cf~Section~\ref{sub:analysis_of_proposed_optimizations}.

In order to minimize the inter-connectivity between mini-batches, we utilize graph clustering techniques for mini-batch selection, as first introduced in the subgraph sampling approach \textsc{Cluster-GCN} \citep{Chiang/etal/2019}.
\textsc{Cluster-GCN} leverages clustering in order to infer meaningful subgraphs, while we aim to minimize history accesses.
Furthermore, \textsc{Cluster-GCN} limits message passing to intra-connected nodes, and therefore ignores potentially useful information outside the current mini-batch.
This inherently limits the model to learn from nodes nearby.
In contrast, our GAS framework makes use of \emph{all} available neighborhood data for aggregation, and therefore avoids this downside.

\section{PyGAS:\@ Auto-Scaling GNNs in PyG}%
\label{sec:optimizations}

\begin{figure}[t]
  \centering
  {\includegraphics[width=\linewidth]{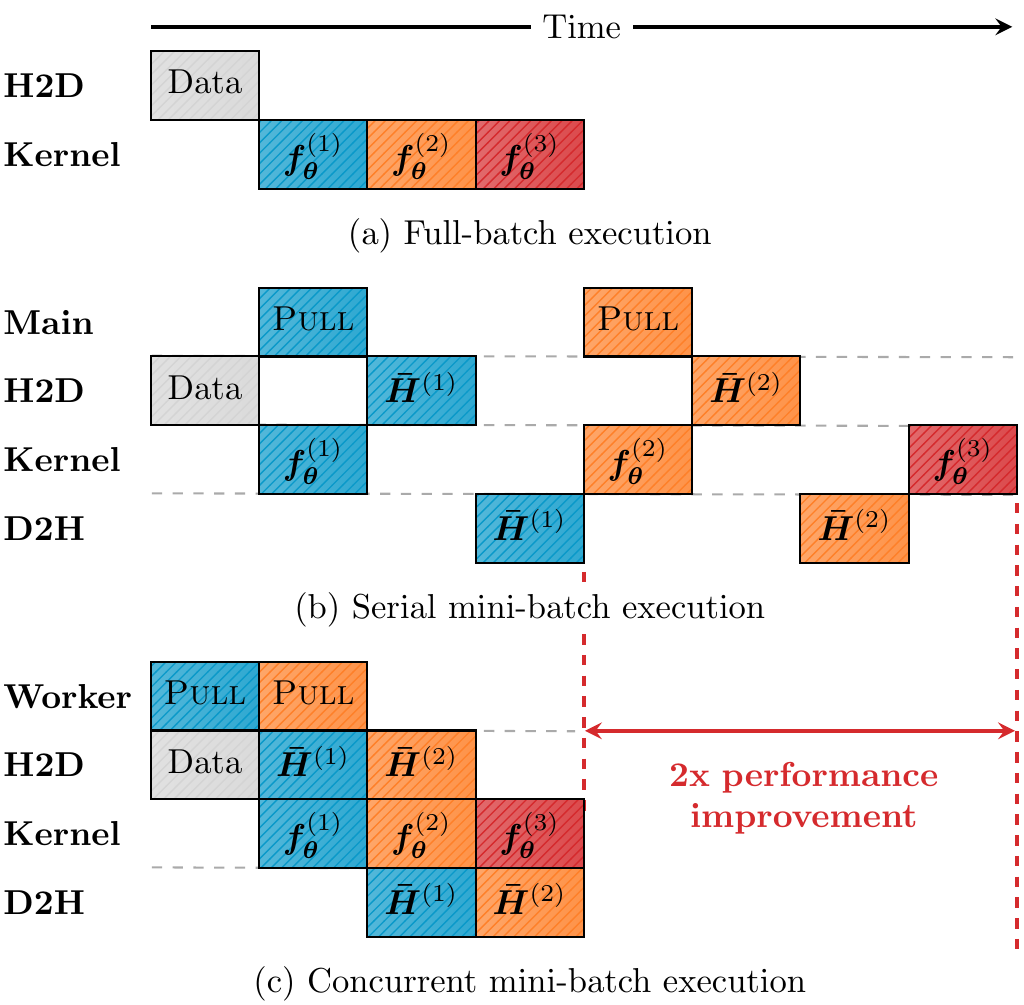}}
  \caption{%
    \textbf{Illustrative runtime performances of a serial and concurrent mini-batch execution in comparison to a full-batch GNN execution.}
    In the full-batch approach (a), all necessary data is first transferred to the device via the \textsc{Host2Device} (H2D) engine, before GNN layers are executed in serial inside the kernel engine.
    As depicted in (b), a serial mini-batch execution suffers from an I/O bottleneck, in particular because each kernel engine has to wait for memory transfers to complete.
    The concurrent mini-batch execution (c) avoids this problem by leveraging an additional worker thread and overlapping data transfers, leading to two times performance improvements in comparison to a serial execution, which is on par with the standard full-batch approach.
  }\label{fig:concurrent}
\end{figure}

\begin{figure*}[t]
  \centering
  \includegraphics[height=0.28cm]{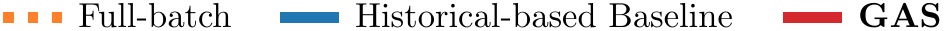}\\
  \hfill{}
  \begin{subfigure}[t]{0.30\textwidth}
    {\includegraphics[height=3.4cm]{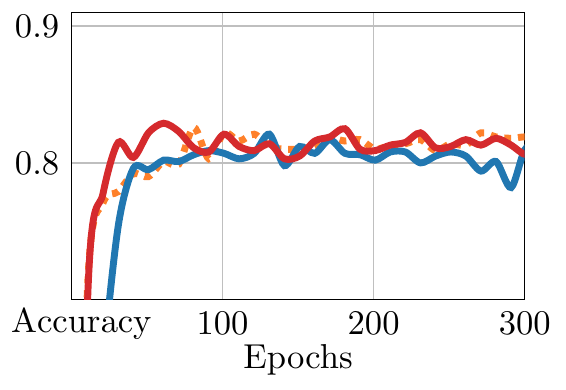}}
    \caption{2-\textsc{GCN} on \textsc{Cora}}\label{fig:gcn}
  \end{subfigure}
  \hfill{}
  \begin{subfigure}[t]{0.30\textwidth}
    {\includegraphics[height=3.4cm]{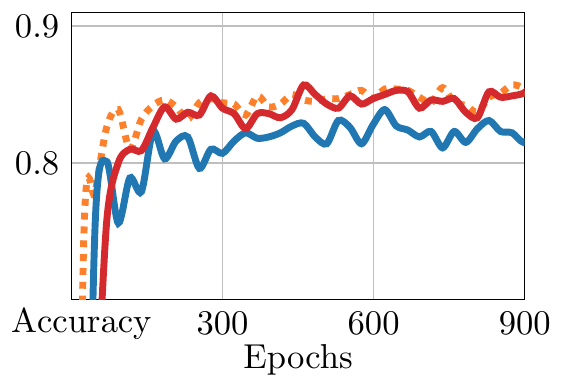}}
    \caption{64-\textsc{GCNII} on \textsc{Cora}}\label{fig:gcnii}
  \end{subfigure}
  \hfill{}
  \begin{subfigure}[t]{0.30\textwidth}
    {\includegraphics[height=3.4cm]{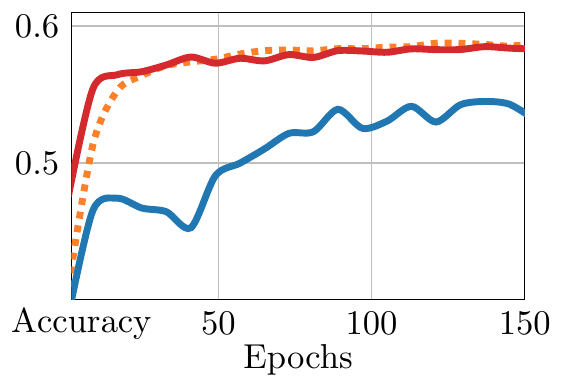}}
    \caption{4-\textsc{GIN} on \textsc{Cluster}}\label{fig:gin}
  \end{subfigure}
  \hfill{}
  \caption{%
    \textbf{Model performance comparison between full-batch, an unoptimized history-based baseline and our GAS approach.}
    In contrast to the historical-based baseline, GAS reaches the quality of full-batch training, especially for (b) deep and (c) expressive models.
    }\label{fig:result2}
\end{figure*}

We condense our GAS framework and theoretical findings into a tool named \emph{PyGAS} that implements all the presented techniques in practice.\footnote{\url{https://github.com/rusty1s/pyg\_autoscale}}
\emph{PyGAS} is built upon \textsc{PyTorch} \citep{Paszke/etal/2019} and utilizes the \textsc{PyTorch Geometric} (PyG) library \citep{Fey/Lenssen/2019}.
It provides an easy-to-use interface to convert common and custom GNN models from \textsc{PyTorch Geometric} into their scalable variants.
Furthermore, it provides a fully deterministic test bed for evaluating models on large-scale graphs.
An example of the interface is shown in the appendix.

\paragraph{Fast Historical Embeddings.}%
\label{par:further_optimizations.}

Our approach accesses histories to account for any data outside the current mini-batch, which requires frequent data transfers to and from the GPU.\@
Therefore, \emph{PyGAS} optimizes pulling from and pushing to histories via \emph{non-blocking} device transfers.
Specifically, we immediately start pulling historical embeddings for each layer asynchronously at the beginning of each optimization step, which ensures that GPUs do not run idle while waiting for memory transfers to complete.
A separate worker thread gathers historical information into one of multiple pinned CPU memory buffers (denoted by \textsc{Pull}), from where it can be transfered to the GPU via the usage of CUDA streams without blocking any CPU or CUDA execution.
Synchronization is done by synchronizing the respective CUDA stream before inputting the transferred data into the GNN layer.
The same strategy is applied for pushing information to the history.
Considering that the device transfer of $\mat{\bar{H}}^{(\ell - 1)}$ is faster than the execution of $\vec{f}_{\mat{\theta}}^{(\ell)}$, this scheme does not lead to any runtime overhead when leveraging historical embeddings and can be twice as fast as its serial non-overlapping counterpart, \cf~Figure~\ref{fig:concurrent}.
We have implemented our non-blocking transfer scheme with custom C++/CUDA code to avoid Python's global interpreter lock.

\section{Experiments}%
\label{sec:experiments}

In this section, we evaluate our GAS framework in practice using \emph{PyGAS}, utilizing 6 different GNN operators and 15 datasets.
Please refer to the appendix for a detailed description of the used GNN operators and datasets, and to our code for hyperparameter configurations.
All models were trained on a single GeForce RTX 2080 Ti (11 GB).
In our experiments, we hold all histories in RAM, using a machine with 64GB of CPU memory.

\subsection{GAS resembles full-batch performance}%
\label{sub:analysis_of_proposed_optimizations}

\begin{table*}
  \centering
  \caption{%
    \textbf{Full-batch vs GAS performance on small transductive graph benchmark datasets across 20 different initializations.}
    Predictive performance of models trained via GAS closely matches those of full-batch gradient descent on all models for all datasets.
  }\label{tab:result1}
  \setlength{\tabcolsep}{5pt}
  \resizebox{0.9\linewidth}{!}{%
  \begin{tabular}{lcccccccccc}
    \mc{9}{c}{\footnotesize{$\dagger$~Results omitted due to unstable performance across different weight initializations, \cf~\citet{Shchur/etal/2018}}} \\
    \toprule
    \mr{2}{\textbf{Dataset}} & \mc{2}{c}{\textsc{GCN}} & \mc{2}{c}{\textsc{GAT}} & \mc{2}{c}{\textsc{APPNP}} & \mc{2}{c}{\textsc{GCNII}} \\
    & \small{Full} & \small{\textbf{GAS}} & \small{Full} & \small{\textbf{GAS}} & \small{Full} & \small{\textbf{GAS}} & \small{Full} & \small{\textbf{GAS}} \\
    \midrule
    \textsc{Cora}             & \acc{81.88}{0.75} & \acc{82.29}{0.76} & \acc{82.80}{0.47} & \acc{83.32}{0.62} & \acc{83.28}{0.60} & \acc{83.19}{0.58} & \acc{85.04}{0.53} & \acc{85.52}{0.39} \\
    \textsc{CiteSeer}         & \acc{70.98}{0.66} & \acc{71.18}{0.97} & \acc{71.72}{0.91} & \acc{71.86}{1.00} & \acc{72.13}{0.73} & \acc{72.63}{0.82} & \acc{73.06}{0.81} & \acc{73.89}{0.48} \\
    \textsc{PubMed}           & \acc{78.73}{1.10} & \acc{79.23}{0.62} & \acc{78.03}{0.40} & \acc{78.42}{0.56} & \acc{80.21}{0.20} & \acc{79.82}{0.52} & \acc{79.72}{0.78} & \acc{80.19}{0.49} \\
    \textsc{Coauthor-CS}      & \acc{91.08}{0.59} & \acc{91.22}{0.45} & \acc{90.31}{0.49} & \acc{90.38}{0.42} & \acc{92.51}{0.47} & \acc{92.44}{0.58} & \acc{92.45}{0.35} & \acc{92.52}{0.31} \\
    \textsc{Coauthor-Physics} & \acc{93.10}{0.84} & \acc{92.98}{0.72} & \acc{92.32}{0.86} & \acc{92.80}{0.61} & \acc{93.40}{0.92} & \acc{93.68}{0.61} & \acc{93.43}{0.52} & \acc{93.61}{0.41} \\
    \textsc{Amazon-Computer}  & \acc{81.17}{1.81} & \acc{80.84}{2.26} & ---$^{\dagger}$   & ---$^{\dagger}$   & \acc{81.79}{2.00} & \acc{81.66}{1.81} & \acc{83.04}{1.81} & \acc{83.05}{1.16} \\
    \textsc{Amazon-Photo}     & \acc{90.25}{1.66} & \acc{90.53}{1.40} & ---$^{\dagger}$   & ---$^{\dagger}$   & \acc{91.27}{1.26} & \acc{91.23}{1.34} & \acc{91.42}{0.81} & \acc{91.60}{0.78} \\
    \textsc{Wiki-CS}          & \acc{79.08}{0.50} & \acc{79.00}{0.41} & \acc{79.44}{0.41} & \acc{79.56}{0.47} & \acc{79.88}{0.40} & \acc{79.75}{0.53} & \acc{79.94}{0.67} & \acc{80.02}{0.43} \\
    \midrule
    $\mat{\Delta}$~\textbf{Mean Accuracy} & \mc{2}{c}{\textbf{+0.13}} & \mc{2}{c}{\textbf{+0.29}} & \mc{2}{c}{\textbf{-0.01}} & \mc{2}{c}{\textbf{+0.29}} \\
    \bottomrule
  \end{tabular}
  }
\end{table*}

First, we analyze how GAS affects the robustness and expressiveness of our method.
We compare GAS against two different baselines: a regular full-batch variant and a history baseline, which naively integrates history-based mini-batch training without any of the additional GAS techniques.
To evaluate, we make use of a shallow 2-layer \textsc{GCN} \citep{Kipf/Welling/2017} and two recent state-of-the-art models: a deep \textsc{GCNII} network with 64 layers \citep{Chen/etal/2020}, and a maximally expressive \textsc{GIN} network with 4 layers \citep{Xu/etal/2019}.
We evaluate those models on tasks for which they are well suitable: classifying academic papers in a citation network (\textsc{Cora}), and identifying community clusters in Stochastic Block Models (\textsc{Cluster}) \citep{Yang/etal/2016,Dwivedi/etal/2020}, \cf~Figure~\ref{fig:result2}.
Since \textsc{Cluster} is a node classification task containing multiple graphs, we first convert it into a super graph (holding all the nodes of all graphs), and partition this super graph using twice as many partitions as there are initial graphs.
It can be seen that especially for deep (64-\textsc{GCNII}, \cf~Figure~\ref{fig:gcnii}) and expressive (4-\textsc{GIN}, \cf~Figure~\ref{fig:gin}) architectures, the naive historical-based baseline fails to reach the desired full-batch performance.
This can be contributed to the high approximation error induced by deep and expressive models.
In contrast, GAS shows far superior performance, reaching the quality of full-batch training in both cases.

In general, we expect the model performances of our GAS mini-batch training to closely resemble the performances of their full-batch counterparts, except for the variance introduced by stochastic optimization (which is, in fact, known to improve generalization \citep{Bottou/Bousquet/2007}).
To validate, we compare our approach against full-batch performances on small transductive benchmark datasets for which full-batch training is easily feasible.
We evaluate on four GNN models that significantly advanced the field of graph representation learning: \textsc{GCN} \citep{Kipf/Welling/2017}, GAT \citep{Velickovic/etal/2018}, \textsc{APPNP} \citep{Klicpera/etal/2019a} and \textsc{GCNII} \citep{Chen/etal/2020}.
For all experiments, we tried to follow the hyperparameter setup of the respective papers as closely as possible and perform an in-depth grid search on datasets for which best performing configurations are not known.
We then apply GAS mini-batch training on the \emph{same} set of hyperparameters.
As shown in Table~\ref{tab:result1}, all models that utilize GAS training perform as well as their full-batch equivalents (with slight gains overall), confirming the practical effectiveness of our approach.
Notably, even for deep GNNs such as \textsc{APPNP} and \textsc{GCNII}, our approach is able to closely resemble the desired performance.

\begin{table*}
  \centering
  \caption{%
    \textbf{Relative performance improvements of individual GAS techniques within a GCNII model.}
    The performance improvement is measured in percentage points in relation to the corresponding model performance obtained by full-batch training.
  }\label{tab:result1_ablation}
  \setlength{\tabcolsep}{5pt}
  \resizebox{0.78\linewidth}{!}{%
  \begin{tabular}{lcccccccc}
    \toprule
    & \mr{2}{\textsc{Cora}} & \mr{2}{\textsc{CiteSeer}} & \mr{2}{\textsc{PubMed}} & \mc{2}{c}{\textsc{Coauthor-}} & \mc{2}{c}{\textsc{Amazon-}} & \mr{2}{\textsc{Wiki-CS}} \\
    & & & & \textsc{CS} & \textsc{Physics} & \textsc{Computer} & \textsc{Photo} & \\
    \midrule
    Baseline       & -3.26 & -5.66 & -3.20 & -0.79 & -0.50 & -5.76 & -4.16 & -3.19 \\
    Regularization & -2.12 & -1.03 & -1.24 & -0.46 & -0.24 & -3.02 & -1.19 & -0.74 \\
    \textsc{Metis} & -1.57 & -3.12 & -1.50 & -0.47 & +0.13 & -2.75 & -1.02 & -0.24 \\
    \textbf{GAS}   & \textbf{+0.48} & \textbf{+0.83} & \textbf{+0.47} & \textbf{+0.07} & \textbf{+0.18} & \textbf{+0.01} & \textbf{+0.18} & \textbf{+0.08} \\
    \bottomrule
  \end{tabular}
  }
\end{table*}

We further conduct an ablation study to highlight the individual performance improvements of our GAS techniques within a \textsc{GCNII} model, \ie~minimizing inter-connectivity and applying regularization techniques.
Table~\ref{tab:result1_ablation} shows the relative performance improvements of individual GAS techniques in percentage points, compared to the corresponding model performance obtained by full-batch training.
Notably, it can be seen that both techniques contribute to resembling full-batch performance, reaching their full strength when used in combination.
We include an additional ablation study for training an expressive \textsc{GIN} model in the appendix.

\subsection{GAS is fast and memory-efficient}%
\label{sub:runtime_analysis}

\begin{table}
  \centering
  \vspace{-0.5cm}
  \caption{%
    \textbf{GPU memory consumption (in GB) and the amount of data used ($\%$) across different GNN execution techniques.}
    GAS consumes low memory while making use of all available neighborhood information during a single optimization step.
  }\label{tab:memory}
  \setlength{\tabcolsep}{0pt}
  \resizebox{\linewidth}{!}{%
    \begin{tabular}{llrrrrrrr}
      \toprule
      & \footnotesize{\textbf{\#\,nodes}} & \mc{2}{c}{\footnotesize{717K}} & \mc{2}{c}{\footnotesize{169K}} & \mc{2}{c}{\footnotesize{2.4M}} \\[-0.1cm]
      & \footnotesize{\textbf{\#\,edges}} & \mc{2}{c}{\footnotesize{7.9M}} & \mc{2}{c}{\footnotesize{1.2M}} & \mc{2}{c}{\footnotesize{61.9M}} \\[-0.05cm]
      & \mr{2}{\textbf{Method}} & \mc{2}{c}{\mr{2}{\textsc{Yelp}}} & \mc{2}{c}{\texttt{ogbn-}} & \mc{2}{c}{\texttt{ogbn-}} \\
      & & & & \mc{2}{c}{\texttt{arxiv}} & \mc{2}{c}{\texttt{products}} \\
      \midrule
      \mr{4}{\rotatebox{90}{\small{2-layer}}}
      & ~Full-batch             & 6.64GB/ & 100\%~~ & 1.44GB/ & 100\%~~ & 21.96GB/ & 100\% \\
      & ~\textsc{GraphSAGE}     & 0.76GB/ &   9\%~~ & 0.40GB/ &  27\%~~ &  0.92GB/ &   2\% \\
      & ~\textsc{Cluster-GCN}   & 0.17GB/ &  13\%~~ & 0.15GB/ &  40\%~~ &  0.16GB/ &  16\% \\
      & ~\textbf{GAS}           & 0.51GB/ & 100\%~~ & 0.22GB/ & 100\%~~ &  0.36GB/ & 100\% \\
      \midrule
      \mr{4}{\rotatebox{90}{\small{3-layer}}}
      & ~Full-batch             & 9.44GB/ & 100\%~~ & 2.11GB/ & 100\%~~ & 31.53GB/ & 100\% \\
      & ~\textsc{GraphSAGE}     & 2.19GB/ &  14\%~~ & 0.93GB/ &  33\%~~ &  4.34GB/ &   5\% \\
      & ~\textsc{Cluster-GCN}   & 0.23GB/ &  13\%~~ & 0.22GB/ &  40\%~~ &  0.23GB/ &  16\% \\
      & ~\textbf{GAS}           & 0.79GB/ & 100\%~~ & 0.34GB/ & 100\%~~ &  0.59GB/ & 100\% \\
      \midrule
      \mr{4}{\rotatebox{90}{\small{4-layer}}}
      & ~Full-batch             & 12.24GB/ & 100\%~~ & 2.77GB/ & 100\%~~ & 41.10GB/ & 100\% \\
      & ~\textsc{GraphSAGE}     &  4.31GB/ &  19\%~~ & 1.55GB/ &  37\%~~ & 11.23GB/ &   8\% \\
      & ~\textsc{Cluster-GCN}   &  0.30GB/ &  13\%~~ & 0.29GB/ &  40\%~~ &  0.29GB/ &  16\% \\
      & ~\textbf{GAS}           &  1.07GB/ & 100\%~~ & 0.46GB/ & 100\%~~ &  0.82GB/ & 100\% \\
      \bottomrule
    \end{tabular}
  }
\end{table}

For training large-scale GNNs, GPU memory consumption will directly dictate the scalability of the given approach.
Here, we show how GAS maintains a low GPU memory footprint while, in contrast to other scalability approaches, accounts for \emph{all} available information inside a GNN's receptive field in a single optimization step.
We compare the memory usage of \textsc{GCN}+\textsc{GAS} training with the memory usage of full-batch \textsc{GCN}, and mini-batch \textsc{GraphSAGE} \citep{Hamilton/etal/2017} and \textsc{Cluster-GCN} \citep{Chiang/etal/2019} training, \cf~Table~\ref{tab:memory}.
Notably, GAS is easily able to fit the required data on the GPU, while memory consumption only increases linearly with the number of layers.
Although \textsc{Cluster-GCN} maintains an overall lower memory footprint than GAS, it will only utilize a fraction of available information inside its receptive field, \ie~$\approx$23\% on average.

\begin{figure}[t]
  \centering
  \vspace{-0.2cm}
  {\includegraphics[height=4.3cm]{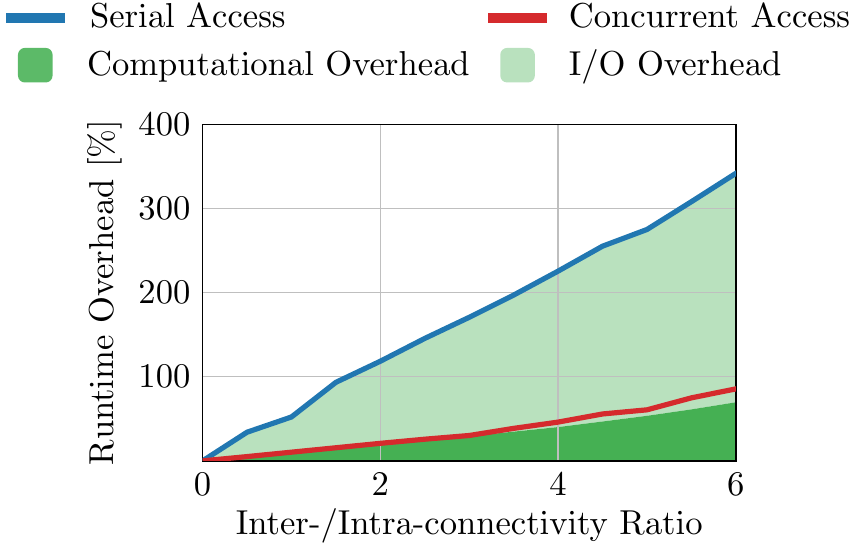}}
  \caption{%
    \textbf{Runtime overhead in relation to the inter-/intra-connectivity ratio of mini-batches, both for serial and concurrent history access patterns.}
    The overall runtime overhead is further separated into computational overhead (overhead of aggregating additional messages) and I/O overhead (overhead of pulling from and pushing to histories).
    Our concurrent memory transfer reduces I/O overhead caused by histories by a wide margin.
  }\label{fig:concurrent2}
\end{figure}

We now analyze how GAS enables large-scale training due to fast mini-batch execution.
Specifically, we are interested in how our concurrent memory transfer scheme (\cf~Section~\ref{sec:optimizations}) reduces the overhead induced by accessing historical embeddings from the offline storage.
For this, we evaluate runtimes of a $4$-layer \textsc{GIN} model on synthetic graph data, which allows fine-grained control over the ratio between inter- and intra-connected nodes, \cf~Figure~\ref{fig:concurrent2}.
Here, a given mini-batch consists of exactly 4,000 nodes which are randomly intra-connected to $60$ other nodes.
We vary the number of inter-connections (connections to nodes outside of the batch) by adding out-of-batch nodes that are randomly inter-connected to $60$ nodes inside the batch.
Notably, the naive serial memory transfer increases runtimes up to 350\%, which indicates that frequent history accesses can cause major I/O bottlenecks.
In contrast, our concurrent access pattern incurs \emph{almost no} I/O overhead \emph{at all}, and the overhead in execution time is solely explained by the computational overhead of aggregating far more messages during message propagation.
Note that in most real-world scenarios, the additional aggregation of history data may only increase runtimes up to 25\%, since most real-world datasets contain inter-/intra-connectivity ratios between $0.1$ and $2.5$, \cf~appendix.
Further, the additional overhead of computing \textsc{Metis} partitions in the pre-processing stage is negligible and is quickly mitigated by faster training times: Computing the partitioning of a graph with 2M nodes takes only about 20--50 seconds (depending on the number of clusters).

Next, we compare runtimes and memory consumption of GAS to the recent \textsc{GTTF} proposal \citep{Markowitz/etal/2021}, which utilizes a fast neighbor sampling strategy based on tensor functionals.
For this, we make use of a 4-layered \textsc{GCN} model with equal mini-batch and receptive field sizes.
As shown in Table~\ref{tab:gttf}, GAS is both faster and consumes less memory than \textsc{GTTF}.
Although \textsc{GTTF} makes use of a fast vectorized sampling procedure, its underlying recursive neighborhood construction still scales \emph{exponentially} with GNN depth, which explains the observable differences in runtime and memory consumption.

\begin{table}
  \centering
  \caption{%
  \textbf{\small Efficiency of \textsc{GCN} with GTTF and GAS.}
  }\label{tab:gttf}
  \setlength{\tabcolsep}{10pt}
  \resizebox{0.9\linewidth}{!}{%
  \begin{tabular}{lrrrr}
    \toprule
    \mr{2}{\textbf{Dataset}} & \mc{2}{c}{\textbf{Runtime} (s)} & \mc{2}{c}{\textbf{Memory} (MB)} \\
    & GTTF & \textbf{GAS} & GTTF & \textbf{GAS} \\
    \midrule
    \textsc{Cora} & 0.077 & \textbf{0.006} & 18.01 & \textbf{2.13} \\
    \textsc{PubMed} & 0.071 & \textbf{0.006} & 28.79 & \textbf{2.19} \\
    \textsc{PPI} & 0.976 & \textbf{0.007} & 134.86 & \textbf{12.37} \\
    \textsc{Flickr} & 1.178 & \textbf{0.007} & 325.97 & \textbf{16.32} \\
    \bottomrule
  \end{tabular}
  }
\end{table}

\subsection{GAS scales to large graphs}%
\label{sub:pygautoscale_scales_to_large_graphs}

In order to demonstrate the scalability and generality of our approach, we scale various GNN operators to common large-scale graph benchmark datasets.
Here, we focus our analysis on GNNs that are notorious hard to scale-up but have the potential to leverage the increased amount of available data to make more accurate predictions.
In particular, we benchmark deep GNNs, \ie~\textsc{GCNII} \citep{Chen/etal/2020}, and expressive GNNs, \ie~{PNA} \citep{Corso/etal/2020}.
Note that it is not possible to run those models in full-batch mode on most of these datasets as they will run out of memory on common GPUs.
We compare with 10 scalable GNN baselines: \textsc{GraphSAGE} \citep{Hamilton/etal/2017}, \textsc{FastGCN} \citep{Chen/etal/2018b}, \textsc{LADIES} \citep{Zou/etal/2019}, \textsc{VR-GCN} \citep{Chen/etal/2018}, \textsc{MVS-GNN} \citep{Cong/etal/2020}, \textsc{Cluster-GCN} \citep{Chiang/etal/2019}, \textsc{GraphSAINT} \citep{Zeng/etal/2020a}, \textsc{SGC} \citep{Wu/etal/2019}, \textsc{SIGN} \citep{Frasca/etal/2020} and \textsc{GBP} \citep{Chen/etal/2020b}.
Since results are hard to compare across different approaches due to differences in frameworks, model implementations, weight initializations and optimizers, we additionally report a shallow \textsc{GCN}+\textsc{GAS} baseline.
GAS is able to train all models on all datasets on a single GPU, while holding corresponding histories in CPU memory.
On the largest dataset, \ie~\texttt{ogbn-products}, this will consume $\approx L \cdot$ 2GB of storage for $L$ layers, which easily fits in RAM on most modern workstations.

\begin{table}
  \centering
  \caption{%
    \textbf{Performance on large graph datasets.}
    GAS is both scalable and general while achieving state-of-the-art performance.
  }\label{tab:result2}
  \setlength{\tabcolsep}{2pt}
  \resizebox{\linewidth}{!}{%
  \begin{tabular}{llccccccc}
    \toprule
    \mc{2}{l}{\footnotesize{\textbf{\#\,nodes}}} & \footnotesize{230K} & \footnotesize{57K} & \footnotesize{89K} & \footnotesize{717K} & \footnotesize{169K} & \footnotesize{2.4M} \\[-0.1cm]
    \mc{2}{l}{\footnotesize{\textbf{\#\,edges}}} & \footnotesize{11.6M} & \footnotesize{794K} & \footnotesize{450K} & \footnotesize{7.9M} & \footnotesize{1.2M} & \footnotesize{61.9M} \\[-0.05cm]
    \mc{2}{l}{\mr{2}{\textbf{Method}}} & \mr{2}{\textsc{Reddit}} & \mr{2}{\textsc{PPI}} & \mr{2}{\textsc{Flickr}} & \mr{2}{\textsc{Yelp}} & \texttt{ogbn-} & \texttt{ogbn-} \\
    & & & & & & \texttt{arxiv} & \texttt{products} \\
    \midrule
    \mc{2}{l}{\textsc{GraphSAGE}}   & 95.40          & 61.20          & 50.10          & 63.40          & 71.49          & 78.70 \\
    \mc{2}{l}{\textsc{FastGCN}}     & 93.70          & ---            & 50.40          & ---            & ---            & ---   \\
    \mc{2}{l}{\textsc{LADIES}}      & 92.80          & ---            & ---            & ---            & ---            & ---   \\
    \mc{2}{l}{\textsc{VR-GCN}}      & 94.50          & 85.60          & ---            & 61.50          & ---            & ---   \\
    \mc{2}{l}{\textsc{MVS-GNN}}     & 94.90          & 89.20          & ---            & 62.00          & ---            & ---   \\
    \mc{2}{l}{\textsc{Cluster-GCN}} & 96.60          & 99.36          & 48.10          & 60.90          & ---            & 78.97 \\
    \mc{2}{l}{\textsc{GraphSAINT}}  & 97.00          & \textbf{99.50} & 51.10          & 65.30          & ---            & 79.08 \\
    \mc{2}{l}{\textsc{SGC}}         & 96.40          & 96.30          & 48.20          & 64.00          & ---            & ---   \\
    \mc{2}{l}{\textsc{SIGN}}        & 96.80          & 97.00          & 51.40          & 63.10          & ---            & 77.60 \\
    \mc{2}{l}{\textsc{GBP}}         & ---            & 99.30          & ---            & \textbf{65.40} & ---            & ---   \\
    \midrule
    \mr{4}{\rotatebox{90}{\small{\,Full-batch}}}
    & \\ [-0.31cm]
    & ~~\textsc{GCN}   & 95.43 & 97.58 & 53.73 & OOM & 71.64 & OOM \\
    & ~~\textsc{GCNII} & OOM   & OOM   & 55.28 & OOM & 72.83 & OOM \\
    & ~~\textsc{PNA}   & OOM   & OOM   & 56.23 & OOM & 72.17 & OOM \\ [0.125cm]
    \midrule
    \mr{3}{\rotatebox{90}{\small{\textbf{GAS}}}}
    & ~~\textsc{GCN}                & 95.45          & 98.92          & 54.00          & 62.94          & 71.68          & 76.66 \\
    & ~~\textsc{GCNII}              & 96.77          & \textbf{99.50} & 56.20          & 65.14          & \textbf{73.00} & 77.24 \\
    & ~~\textsc{PNA}                & \textbf{97.17} & 99.44          & \textbf{56.67} & 64.40          & 72.50          & \textbf{79.91} \\
    \bottomrule
  \end{tabular}
  }
\end{table}

As can be seen in Table~\ref{tab:result2}, the usage of deep and expressive models within our framework advances the state-of-the-art on \textsc{Reddit} and \textsc{Flickr}, while it performs equally well for others, \eg, \textsc{PPI}.
Notably, our approach outperforms the two historical-based variants \textsc{VR-GCN} and \textsc{MVS-GNN} by a wide margin.
Interestingly, our deep and expressive variants reach superior performance than our \textsc{GCN} baseline on \emph{all} datasets, which highlights the benefits of evaluating larger models on larger scale.

\section{Conclusion and Future Work}%
\label{sec:conclusion}

We proposed a general framework for scaling arbitrary message passing GNNs to large graphs without the necessity to sub-sample edges.
As we have shown, our approach is able to train both deep and expressive GNNs in a scalable fashion.
Notably, our approach is \emph{orthogonal} to many methodological advancements, such as unifying GNNs and label propagation \citep{Shi/etal/2020}, graph diffusion \citep{Klicpera/etal/2019b}, or random wiring \citep{Valsesia/etal/2020}, which we like to investigate further in future works.
While our experiments focus on node-level tasks, our work is technically able to scale the training of GNNs for edge-level and graph-level tasks as well.
However, this still needs to be verified empirically.
Another interesting future direction is the fusion of GAS into a distributed training algorithm \citep{Ma/etal/2019,Zhu/etal/2016,Tripathy/etal/2020,Wan/etal/2020,Angerd/etal/2020,Zheng/etal/2020}, and to extend our framework in accessing histories from disk storage rather than CPU memory.
Overall, we hope that our findings lead to the development of sophisticated and expressive GNNs evaluated on large-scale graphs.

\section*{Acknowledgements}

This work has been supported by the \emph{German Research Association (DFG)} within the Collaborative Research Center SFB 876 \emph{Providing Information by Resource-Constrained Analysis}, projects A6 and B2.

\bibliography{paper}
\bibliographystyle{icml2021}

\newpage
\onecolumn

\icmltitle{Appendix}

\setcounter{theorem}{0}

\section{Proofs}%
\label{sec:proofs}

\begin{lemma}\label{lemma1}
  Let $\textsc{Message}^{(\ell)}_{\mat{\theta}}$ and $\textsc{Update}^{(\ell)}_{\mat{\theta}}$ be Lipschitz continuous functions with Lipschitz constants $k_1$ and $k_2$, respectively.
  If, for all $v \in \mathcal{V}$, the inputs are close to the exact input, \ie~$\| \vec{\tilde{h}}^{(\ell - 1)}_{v} - \vec{h}^{(\ell - 1)}_{v} \| \leq \delta$, and the historical embeddings do not run too stale, \ie~$\| \vec{\bar{h}}^{(\ell - 1)}_v - \vec{\tilde{h}}^{(\ell - 1)}_v \| \leq \epsilon$, then the output error is bounded by
  \begin{equation*}
    \| \vec{\tilde{h}}^{(\ell)}_v - \vec{h}^{(\ell)}_v \| \leq \delta\,k_2 + (\delta + \epsilon)\,k_1\,k_2\,|\mathcal{N}(v)|.
  \end{equation*}
\end{lemma}

\begin{proof}
  By triangular inequality, it holds that $\| \vec{\bar{h}}^{(\ell - 1)}_v - \vec{h}^{(\ell - 1)}_v \| \leq \delta + \epsilon$.
  Since both $\textsc{Message}^{(\ell)}_{\mat{\theta}}$ and $\textsc{Update}^{(\ell)}_{\mat{\theta}}$ denote Lipschitz continuous functions with Lipschitz constants $k_1$ and $k_2$, respectively, it further holds that for any $\vec{x}, \vec{y}$:
  \begin{align*}
    \| \textsc{Message}^{(\ell)}_{\mat{\theta}}(\vec{x}) - \textsc{Message}^{(\ell)}_{\mat{\theta}}(\vec{y}) \| & \leq k_1\| \vec{x} - \vec{y} \| \\
    \| \textsc{Update}^{(\ell)}_{\mat{\theta}}(\vec{x}) - \textsc{Update}^{(\ell)}_{\mat{\theta}}(\vec{y}) \| & \leq k_2 \| \vec{x} - \vec{y} \|
  \end{align*}
  Furthermore, the Lipschitz constants for the aggregations $\sum_{\vec{x} \in \mathcal{X}} \vec{x}$, $\frac{1}{|\mathcal{X}|} \sum_{\vec{x} \in \mathcal{X}} \vec{x}$ and $\max_{\vec{x} \in \mathcal{X}} \vec{x}$ are given as $|\mathcal{X}|$, $1$ and $1$, respectively.
  Then,
  \begin{align*}
    & \| \textsc{Update}^{(\ell)}_{\mat{\theta}}(\vec{\tilde{h}}^{(\ell - 1)}_v, \bigoplus\limits_{\mathclap{w \in \mathcal{N}(v)}} \textsc{Message}^{(\ell)}_{\mat{\theta}}(\vec{\bar{h}}_w^{(\ell - 1)})) - \textsc{Update}^{(\ell)}_{\mat{\theta}}(\vec{h}^{(\ell - 1)}_v, \bigoplus\limits_{\mathclap{w \in \mathcal{N}(v)}} \textsc{Message}^{(\ell)}_{\mat{\theta}}(\vec{h}_w^{(\ell - 1)})) \| \\
    \leq\,& k_2 \, ( \delta + |\mathcal{N}(v)| \, (k_1 \, (\delta + \epsilon))) = \delta\,k_2 + (\delta + \epsilon)\,k_1\,k_2\,|\mathcal{N}(v)|. \qedhere
  \end{align*}
\end{proof}

\begin{theorem}\label{theorem1}
  Let $\vec{f}^{(L)}_{\mat{\theta}}$ be a $L$-layered GNN, containing only Lipschitz continuous $\textsc{Message}^{(\ell)}_{\mat{\theta}}$ and $\textsc{Update}^{(\ell)}_{\mat{\theta}}$ functions with Lipschitz constants $k_1$ and $k_2$, respectively.
  If, for all $v \in \mathcal{V}$ and all $\ell \in \{1, \ldots, L-1\}$, the historical embeddings do not run too stale, \ie~$\| \vec{\bar{h}}^{(\ell)}_v - \vec{\tilde{h}}^{(\ell)}_v \| \leq \epsilon^{(\ell)}$, then the final output error is bounded by
  \begin{equation*}
    \| \vec{\tilde{h}}_{v,j}^{(L)} - \vec{h}_{v,j}^{(L)} \| \leq \sum_{\ell = 1}^{L-1} \epsilon^{(\ell)} \, k_1^{L - \ell} \, k_2^{L - \ell} \, {|\mathcal{N}(v)|}^{L - \ell}.
  \end{equation*}
\end{theorem}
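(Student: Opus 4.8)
The plan is to prove the bound by induction over the layer index, with Lemma~\ref{lemma1} supplying the single inductive step and the recursion serving to \emph{eliminate} the closeness error in favour of the staleness errors. Write $\delta^{(\ell)} \coloneqq \max_{v \in \mathcal{V}} \| \vec{\tilde{h}}^{(\ell)}_v - \vec{h}^{(\ell)}_v \|$ for the worst-case closeness error in layer $\ell$, and abbreviate $N \coloneqq |\mathcal{N}(v)|$ for the node $v$ of interest (with the understanding that, as in the theorem statement, the higher intermediate powers are really controlled by powers of the largest degree encountered along the recursion). Since the layer-$\ell$ embedding of $v$ is assembled from the layer-$(\ell-1)$ embeddings of $v$ \emph{and} of its neighbours, the per-node errors are coupled, which is why one tracks the uniform quantity $\delta^{(\ell)}$ — precisely the object that Lemma~\ref{lemma1} bounds under its uniform hypothesis. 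The base case is $\delta^{(1)} = 0$: layer $1$ is evaluated on layer-$0$ embeddings, which are the \emph{exact} input features $\vec{x}_w$ for every node, in- or out-of-mini-batch, and are never replaced by a stale history, so $\vec{\tilde{h}}^{(1)}_v = \vec{h}^{(1)}_v$. Equivalently there is no layer-$0$ staleness, which is why the sum in the statement starts at $\ell = 1$; and layer $L$ contributes no staleness either, since layer-$L$ embeddings are pushed to but never pulled from the history during a single forward pass, which is why the sum stops at $\ell = L-1$.

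For the inductive step, assume the claimed bound for all layers up to $\ell-1$. Applying Lemma~\ref{lemma1} at layer $\ell$ with $\delta \coloneqq \delta^{(\ell-1)}$ and $\epsilon \coloneqq \epsilon^{(\ell-1)}$ yields
\begin{equation*}
  \delta^{(\ell)} \;\leq\; \delta^{(\ell-1)} k_2 + \big(\delta^{(\ell-1)} + \epsilon^{(\ell-1)}\big)\, k_1 k_2\, N \;=\; \delta^{(\ell-1)}\big(k_2 + k_1 k_2 N\big) + \epsilon^{(\ell-1)}\, k_1 k_2\, N .
\end{equation*}
Substituting the induction hypothesis for $\delta^{(\ell-1)}$ turns the right-hand side into a weighted geometric-type series in $k_1 k_2 N$ whose weights are the staleness terms $\epsilon^{(m)}$ with $m \leq \ell-1$; unrolling the recursion all the way down to $\delta^{(1)} = 0$ removes every remaining occurrence of $\delta$, and collecting the coefficient of each $\epsilon^{(\ell)}$ leaves $\| \vec{\tilde{h}}^{(L)}_v - \vec{h}^{(L)}_v \| \leq \sum_{\ell=1}^{L-1} \epsilon^{(\ell)}\, k_1^{L-\ell}\, k_2^{L-\ell}\, {|\mathcal{N}(v)|}^{L-\ell}$, which is the claim.

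The main obstacle is the bookkeeping in this unrolling — in particular, folding the ``self-update'' contribution $\delta^{(\ell-1)} k_2$ into the neighbour-aggregation contribution so as to land on the clean closed form. Expanding $\delta^{(\ell-1)}(k_2 + k_1 k_2 N)$ produces cross terms of the shape $\epsilon^{(m)}\, k_2\, (k_1 k_2 N)^{j}$, which carry one fewer factor of $k_1$ and of $N$ than the neighbour term $\epsilon^{(m)} (k_1 k_2 N)^{j+1}$ appearing at the same depth; one therefore either argues these are dominated whenever $k_1\, |\mathcal{N}(v)| \geq 1$ (the regime of interest, since every node has a neighbour and $\textsc{Message}$ layers are not contractions), or carries them explicitly and observes that they contribute only strictly lower powers of $|\mathcal{N}(v)|$. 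Two routine checks round things off: the index shift between Lemma~\ref{lemma1}'s $\epsilon$ (staleness of layer $\ell-1$) and the theorem's $\epsilon^{(\ell)}$ must line up with the range $\ell \in \{1,\dots,L-1\}$, and, exactly as in the proof of Lemma~\ref{lemma1}, replacing sum aggregation by mean or max removes the $|\mathcal{N}(v)|$ factors throughout and gives the correspondingly tighter bound.
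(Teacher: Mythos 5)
Your proof takes essentially the same route as the paper's: the base case $\delta^{(1)} = 0$ (layer-$0$ inputs are exact, so the first layer's output is exact), followed by unrolling the recursion $\delta^{(\ell)} = \delta^{(\ell-1)} k_2 + (\delta^{(\ell-1)} + \epsilon^{(\ell-1)})\, k_1 k_2 |\mathcal{N}(v)|$ supplied by Lemma~\ref{lemma1}. If anything you are more careful than the paper, whose proof silently discards the cross terms $\epsilon^{(m)}\, k_2\, {(k_1 k_2 |\mathcal{N}(v)|)}^{j}$ produced by the self-update contribution $\delta^{(\ell-1)} k_2$ when it asserts that the recursion ``yields'' the stated closed form; your remark that these terms carry strictly lower powers of $k_1$ and $|\mathcal{N}(v)|$ (and are dominated when $k_1 |\mathcal{N}(v)| \geq 1$) is precisely the bookkeeping the paper omits.
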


\begin{proof}
  For layer $\ell = 1$, the inputs do not need to be estimated, \ie~$\delta^{(0)} = \| \vec{\tilde{h}}_v^{(0)} - \vec{h}_v^{(0)} \| = 0$, and, as a result, the output is \emph{exact}, \ie~$\delta^{(1)} = \| \vec{\tilde{h}}_v^{(1)} - \vec{h}_v^{(1)} \| = 0$.
  With $\| \vec{\bar{h}}^{(1)}_v - \vec{\tilde{h}}^{(1)}_v \| \leq \epsilon^{(1)}$, it directly follows via Lemma~\ref{lemma1} that the approximation error of layer $\ell =2$ is bounded by $\| \vec{\tilde{h}}^{(2)}_v - \vec{h}^{(2)}_v \| \leq \epsilon^{(1)} \, k_1 \, k_2 \, |\mathcal{N}(v)| = \delta^{(2)}$.
  Recursively replacing
  \begin{equation*}
    \delta^{(\ell)} = \delta^{(\ell -1)}\,k_2 + (\delta^{(\ell-1)} + \epsilon^{(\ell-1)}) \, k_1 \, k_2 \, |\mathcal{N}(v)|
  \end{equation*}
  in $\| \vec{\tilde{h}}_v^{(L)} - \vec{h}_v^{(L)} \| \leq \delta^{(L-1)}\,k_2 + (\delta^{(L-1)} + \epsilon^{(L-1)}) \, k_1 \, k_2 \, |\mathcal{N}(v)|$ (\cf~Lemma~\ref{lemma1}) yields
  \begin{equation*}
    \| \vec{\tilde{h}}_v^{(L)} - \vec{h}_v^{(L)} \| \leq \sum_{\ell = 1}^{L-1} \epsilon^{(\ell)} \, k_1^{L - \ell} \, k_2^{L - \ell} \, {|\mathcal{N}(v)|}^{L - \ell}. \qedhere
  \end{equation*}
\end{proof}

\begin{minipage}[H]{0.525\textwidth}
\begin{proposition}
  Let $\vec{f}^{(L)}_{\mat{\theta}} \colon \mathcal{V} \to \mathbb{R}^{d}$ be a $L$-layered GNN as expressive as the WL test in distinguishing the $L$-hop neighborhood around each node $v \in \mathcal{V}$.
  Then, there exists a graph $\mat{A} \in {\{0, 1 \}}^{|\mathcal{V}| \times |\mathcal{V}|}$ for which $\vec{f}^{(L)}_{\mat{\theta}}$ operating on a sampled variant $\mat{\tilde{A}}$, $\tilde{a}_{v,w} = \begin{cases} \frac{|\mathcal{N}(v)|}{|\mathcal{\tilde{N}}(v)|}, & \textrm{if } w \in \mathcal{\tilde{N}}(v) \\ 0, & \textrm{otherwise} \end{cases}$, produces a non-equivalent coloring, \ie~$\vec{\tilde{h}}^{(L)}_v \neq \vec{\tilde{h}}^{(L)}_w$ while $c_v^{(L)} = c_w^{(L)}$ for nodes $v, w \in \mathcal{V}$.
\end{proposition}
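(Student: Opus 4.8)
The plan is to construct an explicit counterexample: a graph containing two nodes $v, w$ that are provably WL-indistinguishable after $L$ rounds (so $c_v^{(L)} = c_w^{(L)}$), yet whose sampled neighborhoods $\mathcal{\tilde{N}}(v)$ and $\mathcal{\tilde{N}}(w)$ differ in a way that any WL-as-expressive GNN operating on $\mat{\tilde{A}}$ must detect. First I would take the simplest regular-graph construction: e.g. two disjoint cycles $C_n$ and $C_m$ with $n \neq m$ but both $n, m > 2L$, or even more simply a single $d$-regular graph where all nodes have identical $L$-hop neighborhood structure (so all nodes share the same WL color $c_v^{(L)}$, since the WL test on a vertex-transitive graph with uniform initial features assigns every node the same color at every round). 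On such a graph, the exact GNN $\vec{f}^{(L)}_{\mat{\theta}}$ would produce $\vec{h}^{(L)}_v = \vec{h}^{(L)}_w$ for all $v,w$.

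Next I would specify the sampling. On a $d$-regular graph, pick a sampling scheme that deletes a different set of edges around $v$ than around $w$ — for instance, sample $\mathcal{\tilde{N}}(v) \subsetneq \mathcal{N}(v)$ of size $\tilde{d} < d$, and do so inconsistently across nodes so that the sampled graph $\mat{\tilde{A}}$ is no longer regular, or is regular but the $\tilde{d}$-neighborhoods around $v$ and $w$ have genuinely non-isomorphic rooted $L$-hop structure. Since $\vec{f}^{(L)}_{\mat{\theta}}$ is assumed as expressive as the WL test on the $L$-hop neighborhood of each node, and the rooted $L$-hop subtrees of $v$ and $w$ in $\mat{\tilde{A}}$ are non-isomorphic as WL structures, the GNN must output $\vec{\tilde{h}}^{(L)}_v \neq \vec{\tilde{h}}^{(L)}_w$. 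The reweighting $\tilde{a}_{v,w} = |\mathcal{N}(v)|/|\mathcal{\tilde{N}}(v)|$ only rescales messages and does not restore the lost topological information, so it does not affect this conclusion; I would note that the rescaling is a constant factor absorbable into $\textsc{Message}$ / $\textsc{Update}$.

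The cleanest route is probably: let $\mathcal{G}$ be the disjoint union of a large cycle and a large clique (or two cycles of coprime lengths), both with uniform node features, so that $c_v^{(L)}$ is constant everywhere; then sample so that around $v$ (in the cycle part) the sampled graph locally still looks like a path, while around $w$ (in the clique part) sampling leaves a nontrivially branching tree — these rooted trees are WL-distinguishable, forcing $\vec{\tilde{h}}^{(L)}_v \neq \vec{\tilde{h}}^{(L)}_w$. I would make the parameters ($n$, $m$, $L$, $\tilde{d}$) explicit and verify the two WL claims (equal colors on $\mathcal{G}$; unequal rooted $L$-hop colors on $\mat{\tilde{A}}$) by the standard unrolling of WL into rooted subtree isomorphism.

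The main obstacle is ensuring the construction is robust to \emph{which} edges the sampler picks: the statement only promises existence of \emph{a} graph, but we want the conclusion to hold for the sampling rule in the natural sense, so I would either fix a deterministic sampling pattern as part of the construction, or argue that for \emph{any} fixed-size neighbor sub-sampling there is a graph on which it destroys a WL-relevant distinction — the latter being slightly more delicate and requiring a pigeonhole/counting argument over possible sampled subtrees versus the true subtree. I expect that getting the quantifier order right (graph chosen before or after the sampler) and keeping the WL-equivalence argument airtight across $L$ rounds will be where the real care is needed; the rest is routine bookkeeping about rooted subtree isomorphism.
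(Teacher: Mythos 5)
Your overall strategy---exhibit an explicit graph in which two nodes are WL-equivalent but become distinguishable after sub-sampling---is exactly the paper's strategy (the paper gives a single small \emph{colored} graph and checks the claim already at $L=1$). However, the concrete constructions you propose have a genuine gap, and it is precisely the importance reweighting $\tilde{a}_{v,w} = |\mathcal{N}(v)|/|\mathcal{\tilde{N}}(v)|$ that you dismiss as ``a constant factor absorbable into $\textsc{Message}$/$\textsc{Update}$'' that breaks them. On a $d$-regular graph (or a disjoint union of cycles) with \emph{uniform} node features, every node aggregates a weighted multiset of identical neighbor features, and the weighted count is $|\mathcal{\tilde{N}}(v)| \cdot |\mathcal{N}(v)|/|\mathcal{\tilde{N}}(v)| = |\mathcal{N}(v)| = d$ for every node regardless of which or how many neighbors were sampled. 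By induction over layers, $\vec{\tilde{h}}^{(\ell)}_v$ is therefore identical for all $v$ at every $\ell$: the sampled computation remains exactly uniform, and no choice of sampled edges produces $\vec{\tilde{h}}^{(L)}_v \neq \vec{\tilde{h}}^{(L)}_w$. The intuition that ``non-isomorphic rooted $L$-hop subtrees in $\mat{\tilde{A}}$ force different embeddings'' is false here, because the GNN does not see the unlabeled sampled topology---it sees weighted multisets of features, and these coincide. Your ``cycle plus clique'' variant has a separate error: a node in a large clique and a node in a large cycle already receive different WL colors after one refinement round (their neighbor multisets have different cardinalities), so $c^{(L)}_v = c^{(L)}_w$ fails.

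The missing idea is that the counterexample must use \emph{heterogeneous} neighbor colors: if $\mathcal{N}(v)$ and $\mathcal{N}(w)$ contain the same multiset of colors (so $c^{(1)}_v = c^{(1)}_w$) but the sampler retains, say, only the red neighbor of $v$ and only the blue neighbor of $w$, then the sampled aggregations are $\frac{|\mathcal{N}(v)|}{1}\textsc{Msg}(\text{red})$ versus $\frac{|\mathcal{N}(w)|}{1}\textsc{Msg}(\text{blue})$, which no scalar reweighting can equalize; an injective (WL-expressive) layer then yields $\vec{\tilde{h}}^{(1)}_v \neq \vec{\tilde{h}}^{(1)}_w$. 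This is exactly what the paper's figure-based proof does in one layer. Symmetry-breaking of an unlabeled regular topology alone cannot do the job.
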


\begin{proof}
  Consider the colored graph $\mat{A}$ and its sampled variant $\mat{\tilde{A}}$ as shown on the right.
  Here, it holds that $\vec{h}^{(1)}_{v_1} = \vec{h}^{(1)}_{v_4}$ while $\vec{\tilde{h}}^{(1)}_{v_1} \neq \vec{\tilde{h}}^{(1)}_{v_4}$.
\end{proof}
\end{minipage}
\hfill{}
\begin{minipage}[H]{0.45\textwidth}
  {\includegraphics[width=\textwidth]{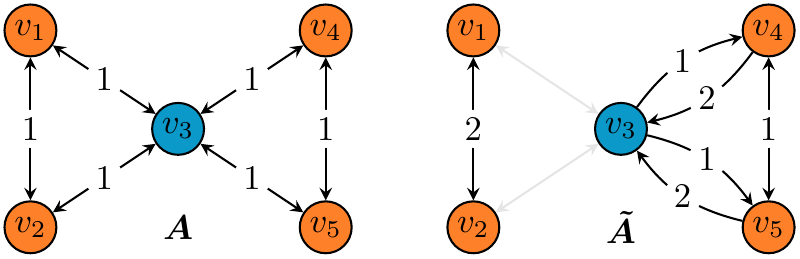}}
\end{minipage}

\begin{minipage}[H]{0.625\textwidth}
\begin{lemma}\label{lemma2}
  Let $\{ \hspace{-3pt} \{ \vec{h}_v^{(\ell - 1)} \colon v \in \mathcal{V} \} \hspace{-3pt} \}$ be a countable multiset such that $\| \vec{h}_v^{(\ell - 1)} - \vec{h}_w^{(\ell - 1)} \| > 2 (\delta + \epsilon)$ for all $v,w \in \mathcal{V}$, $\vec{h}_v^{(\ell - 1)} \neq \vec{h}_w^{(\ell - 1)}$.
  If the inputs are close to the exact input, \ie~\mbox{$\| \vec{\tilde{h}}_v^{(\ell - 1)} - \vec{h}_v^{(\ell - 1)} \| \leq \delta$}, and the historical embeddings do not run too stale, \ie~\mbox{$\| \vec{\bar{h}}_v^{(\ell - 1)} - \vec{\tilde{h}}_v^{(\ell - 1)} \| \leq \epsilon$}, then there exist $\textsc{Message}^{(\ell)}_{\mat{\theta}}$ and $\textsc{Update}^{(\ell)}_{\mat{\theta}}$ functions, such that
  \begin{equation*}
    \| \vec{f}^{(\ell)}_{\mat{\theta}}(\vec{\tilde{h}}_v^{(\ell - 1)}) - \vec{f}^{(\ell)}_{\mat{\theta}}(\vec{h}_v^{(\ell - 1)}) \| \leq \delta + \epsilon
  \end{equation*}
  and
  \begin{equation*}
    \| \vec{f}^{(\ell)}_{\mat{\theta}}(\vec{h}_v^{(\ell - 1)}) - \vec{f}^{(\ell)}_{\mat{\theta}}(\vec{h}_w^{(\ell - 1)}) \| > 2(\delta + \epsilon + \lambda)
  \end{equation*}
  for all $v,w \in \mathcal{V}$, $\vec{h}_v^{(\ell - 1)} \neq \vec{h}_w^{(\ell - 1)}$ and all $\lambda > 0$.
\end{lemma}

\begin{proof}
  Define $\vec{\phi} \colon \mathbb{R}^d \to \mathbb{R}^d$ as the Voronoi tessellation induced by exact inputs $\{ \vec{h}_v^{(\ell - 1)} \colon v \in \mathcal{V} \}$:
  \begin{equation}
    \vec{\phi}(\vec{x}) = \vec{h}_v^{(\ell - 1)} \quad \textrm{if} \quad \| \vec{x} - \vec{h}_v^{(\ell - 1)} \| \leq \| \vec{x} - \vec{h}_w^{(\ell - 1)} \| \quad \textrm{for all } v \neq w \in \mathcal{V}
  \end{equation}
  Furthermore, we know that there exists $\textrm{Message}^{(\ell)}_{\mat{\theta}}$ and $\textsc{Update}^{(\ell)}_{\mat{\theta}}$ functions so that $\vec{f}^{(\ell)}_{\mat{\theta}}$ is injective for all countable multisets \citep{Zaheer/etal/2017,Xu/etal/2019,Morris/etal/2019,Maron/etal/2019}.
  Therefore, it holds that $\| \vec{f}^{(\ell)}_{\mat{\theta}}\big(\vec{\phi}\big(\vec{\tilde{h}}_v^{(\ell - 1)}\big)\big) - \vec{f}^{(\ell)}_{\mat{\theta}}\big(\vec{\phi}\big(\vec{h}_v^{(\ell - 1)}\big)\big) \| = 0 \leq \delta$.
  Since $\{ \hspace{-3pt} \{ \vec{h}_v^{(\ell - 1)} \colon v \in \mathcal{V} \} \hspace{-3pt} \}$ is countable and $\vec{f}^{(\ell)}_{\mat{\theta}}$ is injective, there exists a $\kappa > 0$ such that
  $\| \vec{f}^{(\ell)}_{\mat{\theta}}\big(\vec{\phi}\big(\vec{h}_v^{(\ell - 1)}\big)\big) - \vec{f}^{(\ell)}_{\mat{\theta}}\big(\vec{\phi}\big(\vec{h}_w^{(\ell - 1)}\big)\big) \| > \kappa$ for all $v,w \in \mathcal{V}$, $\vec{h}_v^{(\ell - 1)} \neq \vec{h}_w^{(\ell - 1)}$.
  Due to the homogeneity of $\| \cdot \|$, it directly follows that there must exists $\alpha > 0$ so that
  \begin{equation*}
    \| \alpha \vec{f}^{(\ell)}_{\mat{\theta}}\big(\vec{\phi}\big(\vec{h}_v^{(\ell - 1)}\big)\big) - \alpha \vec{f}^{(\ell)}_{\mat{\theta}}\big(\vec{\phi}\big(\vec{h}_w^{(\ell - 1)}\big)\big) \| > \alpha\,\kappa \geq 2(\delta  + \epsilon + \lambda)
  \end{equation*}
 for all $v,w \in \mathcal{V}$, $\vec{h}_v^{(\ell - 1)} \neq \vec{h}_w^{(\ell - 1)}$ and all $\lambda > 0$.
\end{proof}
\end{minipage}
\hfill{}
\begin{minipage}[H]{0.35\textwidth}
  \includegraphics[width=\textwidth]{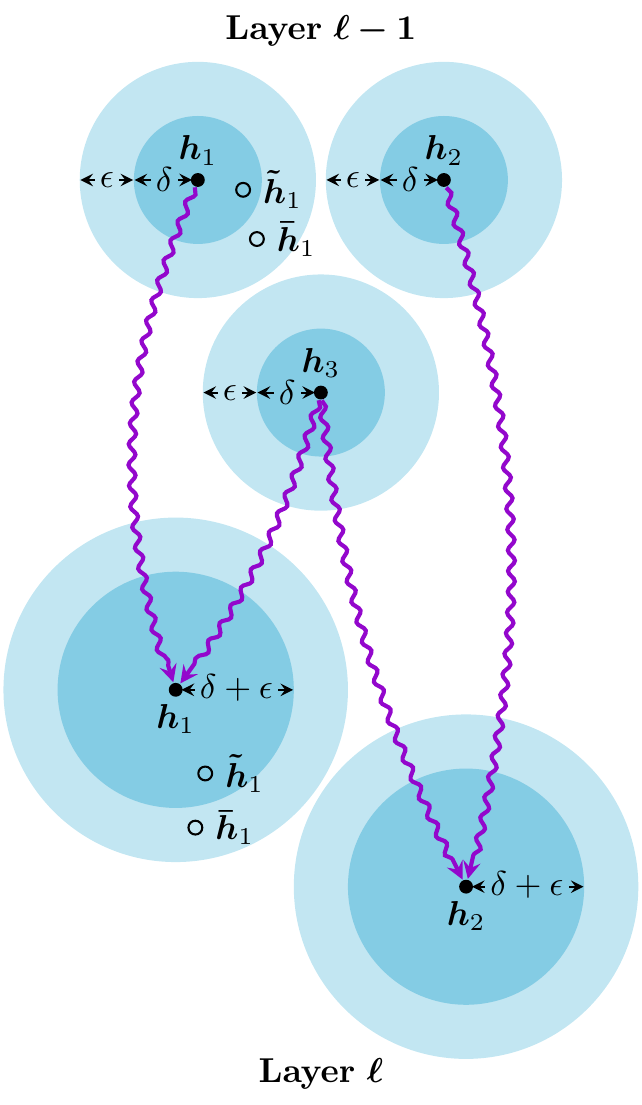}
\end{minipage}

\begin{theorem}\label{theorem2}
  Let $\vec{f}^{(L)}_{\mat{\theta}}$ be a $L$-layered GNN in which all $\textsc{Message}^{(\ell)}_{\mat{\theta}}$ and $\textsc{Update}^{(\ell)}_{\mat{\theta}}$ functions fulfill the conditions of Lemma~\ref{lemma2}.
  Then, there exists a map $\phi \colon \mathbb{R}^d \to \Sigma$ so that $\phi(\vec{\tilde{h}}^{(L)}_v) = c^{(L)}_v$ for all $v \in \mathcal{V}$.
\end{theorem}

\begin{proof}
  Define $\phi \colon \mathbb{R}^d \to \Sigma$ as the Voronoi tessellation induced by exact outputs $\{ \vec{h}_v^{(L)} \colon v \in \mathcal{V} \}$:
  \begin{equation*}
    \phi(\vec{x}) = c_v^{(L)} \quad \textrm{if} \quad \| \vec{x} - \vec{h}^{(L)}_v \| \leq \| \vec{x} - \vec{h}^{(L)}_w \| \quad \textrm{for all } v \neq w \in \mathcal{V}
  \end{equation*}
  Since each GNN layer $\vec{f}^{(\ell)}_{\mat{\theta}}$ is injective for exact inputs,
  we know that such a function needs to exist \citep{Xu/etal/2019,Morris/etal/2019}.
  Therefore, it is sufficient to show that there exists a $\delta^{(L)} > 0$ so that $\| \vec{\tilde{h}}^{(L)}_v - \vec{h}^{(L)}_v \| \leq \delta^{(L)}$ and $\| \vec{h}^{(L)}_v - \vec{h}^{(L)}_w \| > 2 \delta^{(L)}$ for all $v, w \in \mathcal{V}$, $\vec{h}^{(L)}_v \neq \vec{h}^{(L)}_w$.
  Following upon Theorem~\ref{theorem1}, we know that $\| \vec{\tilde{h}}_v^{(1)} - \vec{h}_v^{(1)} \| = 0$.
  Due to Lemma~\ref{lemma2}, it holds that $\| \vec{\tilde{h}}_v^{(2)} - \vec{h}_v^{(2)} \| \leq \epsilon^{(1)}$.
  The next layer introduces an increased error, \ie~$\| \vec{\bar{h}}_v^{(2)} - \vec{h}_v^{(2)} \| \leq \epsilon^{(1)} + \epsilon^{(2)}$, and to compensate, we set $\lambda^{(2)} = \epsilon^{(2)}$ so that $\| \vec{h}_v^{(2)} - \vec{h}_w^{(2)} \| > 2 \, (\epsilon^{(1)} + \epsilon^{(2)} )$ for all $v, w \in \mathcal{V}$, $\vec{h}^{(L)}_v \neq \vec{h}^{(L)}_w$.
  By recursively applying Lemma~\ref{lemma2} with $\lambda^{(\ell)} = \epsilon^{(\ell)}$, it immediately follows that
  $\| \vec{\tilde{h}}_v^{(L)} - \vec{h}_v^{(L)} \| \leq \sum_{\ell = 1}^{L-1} \epsilon^{(\ell)} = \delta^{(L)}$, and $\| \vec{\tilde{h}}_v^{(L)} - \vec{h}_w^{(L)} \| > \sum_{\ell = 1}^{L-1} 2\,\epsilon^{(\ell)}$  for all $v, w \in \mathcal{V}$, \mbox{$\vec{h}^{(L)}_v \neq \vec{h}^{(L)}_w$}.
\end{proof}

\section{Algorithm}%
\label{sec:algorithm}

Our GAS mini-batch training algorithm is given in Algorithm~\ref{alg:algorithm}:

\begin{algorithm}[H]
  \setstretch{1.5}
  \caption{GAS Mini-batch Execution}\label{alg:algorithm}
  \begin{algorithmic}
    \STATE{\textbf{Input}: Graph $\mathcal{G} = (\mathcal{V}, \mathcal{E})$, input node features $\mat{H}^{(0)}$, number of batches $B$, number of layers $L$}
    \STATE{$\{ \mathcal{B}_1, \ldots, \mathcal{B}_B \} \leftarrow \textsc{Split}(\mathcal{G}, B)$}
    \STATE{$\mathcal{V}_b \leftarrow \bigcup_{v \in \mathcal{B}_b} \mathcal{N}(v) \cup \{ v \}$} \hfill{} $\forall b \in \{ 1, \ldots, B \}$
    \STATE{$\mathcal{G}_b \leftarrow \mathcal{G}[\mathcal{V}_b]$} \hfill{} $\forall b \in \{ 1, \ldots, B \}$
    \FOR{$\mathcal{B}_b \in \{ \mathcal{B}_1, \ldots, \mathcal{B}_B \}$}
      \FOR{$\ell \in \{ 1, \ldots, L - 1 \}$}
        \STATE{$\vec{h}_v^{(\ell)} \leftarrow \vec{f}^{(\ell)}_{\mat{\theta}}( \vec{h}_v^{(\ell - 1)}, \{ \hspace{-3pt} \{ \vec{h}_w^{(\ell - 1)} : w \in \mathcal{N}(v) \} \hspace{-3pt} \} )$} \hfill{} $\forall v \in \mathcal{B}_b$
        \STATE{$\textsc{Push}^{(\ell)}(\vec{h}_v^{(\ell)})$} \hfill{} $\forall v \in \mathcal{B}_b$
        \STATE{$\vec{h}_w^{(\ell)} \leftarrow \textsc{Pull}^{(\ell)}(w)$} \hfill{} $\forall w \in \mathcal{V}_b \setminus \mathcal{B}_b$
      \ENDFOR%
      \STATE{$\vec{h}_v^{(L)} \leftarrow \vec{f}^{(L)}_{\mat{\theta}}( \vec{h}_v^{(L - 1)}, \{ \hspace{-3pt} \{ \vec{h}_w^{(L - 1)} : w \in \mathcal{N}(v) \} \hspace{-3pt} \} )$} \hfill{} $\forall v \in \mathcal{B}_b$
    \ENDFOR%
  \end{algorithmic}
\end{algorithm}

\section{GNN Operators}%
\label{sec:gnn_operators}

We briefly recap the details of all graph convolutional layers used in our experiments.
We omit final non-linearities and edge features due to simplicity.

\paragraph{Graph Convolutional Networks (\textsc{GCN})}%
\label{par:graph_convolutional_networks}

use a symmetrically normalized mean aggregation followed by linear transformation \citep{Kipf/Welling/2017}
\begin{equation*}
  \vec{h}_v^{(\ell)} = \sum_{w \in \mathcal{N}(v) \cup \{ v \}} \frac{1}{c_{w,v}} \mat{W} \vec{h}_w^{(\ell - 1)},
\end{equation*}
where $c_{w,v} = \sqrt{\deg(w) + 1} \sqrt{\deg(v) + 1}$.

\paragraph{Graph Attention Networks (\textsc{GAT})}%
\label{par:graph_attention_networks}

perform an anisotropic aggregation \citep{Velickovic/etal/2018}
\begin{equation*}
  \vec{h}_v^{(\ell)} = \sum_{w \in \mathcal{N}(v) \cup \{ v \}} \alpha_{w,v} \mat{W} \vec{h}_w^{(\ell - 1)},
\end{equation*}
where normalization is achieved via learnable attention coefficients
\begin{equation*}
  \alpha_{w,v} = \frac{\exp \left( \textrm{LeakyReLU}\left( \vec{a}^{\top} \hspace{-2pt} \left[ \mat{W} \vec{h}_v^{(\ell - 1)}, \mat{W} \vec{h}_w^{(\ell - 1)} \right] \right) \right)}{\sum_{k \in \mathcal{N}(v) \cup \{ v \}}\exp \left( \textrm{LeakyReLU}\left( \vec{a}^{\top} \hspace{-2pt} \left[ \mat{W} \vec{h}_v^{(\ell - 1)}, \mat{W} \vec{h}_k^{(\ell - 1)} \right] \right) \right)}.
\end{equation*}

\paragraph{Approximate Personalized Propagation of Neural Predictions (\textsc{APPNP})}%
\label{par:approximate_personalized_propagation_of_neural_predictions}

networks first perform a graph-agnostic prediction of node labels, \ie~$\vec{h}^{(0)}_v = \textrm{MLP}(\vec{x}_v)$, and smooth initial label predictions via propagation afterwards \citep{Klicpera/etal/2019a}
\begin{equation*}
  \vec{h}^{(\ell)} = \alpha\,\vec{h}^{(0)} + (1 - \alpha) \sum_{w \in \mathcal{N} \cup \{ v \}} \frac{1}{c_{w,v}} \vec{h}^{(\ell - 1)}_w,
\end{equation*}
where $\alpha \in [0, 1]$ denotes the teleport probability and $c_{w,v}$ is defined as in \textsc{GCN}.
Notably, the final propagation layers are non-trainable, and predictions are solely conditioned on node features (while gradients of model parameters are not).

\paragraph{Simple and Deep Graph Convolutional Networks (\textsc{GCNII})}%
\label{par:simple_and_deep_graph_convolutional_networks}

extend the idea of \textsc{APPNP} to a trainable propgation scheme which leverages initial residual connections \citep{Chen/etal/2020}
\begin{equation*}
  \vec{h}_v^{(\ell)} = \alpha \mat{W} \vec{h}_v^{(0)} + (1 - \alpha) \sum_{w \in \mathcal{N}(v) \cup \{ v \}} \frac{1}{c_{w,v}} \mat{W} \vec{h}_w^{(\ell - 1)},
\end{equation*}
and $\mat{W}$ makes use of identity maps, \ie~$\mat{W} \leftarrow (1 - \beta)\mat{I} + \beta \mat{W}$ for $\beta \in [0, 1]$.

\paragraph{Graph Isomorphism Networks (\textsc{GIN})}%
\label{par:graph_isomorphism_networks}

make use of sum aggregation and MLPs to obtain a maximally powerful GNN operator \citep{Xu/etal/2019}
\begin{equation*}
  \vec{h}_v^{(\ell)} = \textrm{MLP}_{\mat{\theta}} \left( (1 + \epsilon) \, \vec{h}_v^{(\ell - 1)} + \sum_{w \in \mathcal{N}(v)} \vec{h}_w^{(\ell - 1)} \right),
\end{equation*}
where $\epsilon \in \mathbb{R}$ is a trainable parameter in order to distinguish neighbors from central nodes.

\paragraph{Principal Neighborhood Aggregation (\textsc{PNA})}%
\label{par:principal_neighborhood_aggregation}

networks leverage mulitple aggregators combined with degree-scalers to capture graph structural properties \citep{Corso/etal/2020}
\begin{equation*}
  \vec{h}_v^{(\ell)} = \mat{W}_2 \left[ \vec{h}_v^{(\ell - 1)}, \bigoplus_{w \in \mathcal{N}(v)} \mat{W}_1 \left[ \vec{h}_v^{(\ell - 1)}, \vec{h}_w^{(\ell - 1)} \right] \right],
\end{equation*}
where
\begin{equation*}
  \bigoplus =
  \underbrace{\begin{bmatrix}
    1 \\
    s(\deg(v), 1) \\
    s(\deg(v), -1)
  \end{bmatrix} }_{\text{Scalers}}
  \otimes
  \underbrace{\begin{bmatrix}
    \textrm{mean} \\
    \min \\
    \max
  \end{bmatrix}}_{\text{Aggregators}},
\end{equation*}
with $\otimes$ being the tensor product and
\begin{equation*}
  s(d, \alpha) = {\left( \frac{\log(d + 1)}{\frac{1}{|\mathcal{V}|} \sum_{v \in \mathcal{V}} \log(\deg(v) + 1)} \right)}^{\alpha}
\end{equation*}
denoting degree-scalers.

\section{PyGAS Programming Interface}%
\label{sec:programming_interface}

To highlight the ease-of-use of our framework, we showcase the necessary changes to convert a common \textsc{GCN} architecture \citep{Kipf/Welling/2017} implemented in \textsc{PyTorch Geometric} \citep{Fey/Lenssen/2019} (\cf~Listing~\ref{lst:code1}) to its corresponding scalable version (\cf~Listing~\ref{lst:code2}).
In particular, our model now inherits from \texttt{ScalableGNN}, which takes care of creating all history embeddings (accessible via \texttt{self.histories}) and provides an efficient concurrent history access pattern via \texttt{push\_and\_pull()}.
Notably, the \texttt{forward()} execution method of our model now takes in the additional \texttt{n\_id} parameter, which holds the global node index for each node in the current mini-batch.
This assignment vector is necessary to push and pull the intermediate mini-batch embeddings to and from the global history embeddings.

\begin{lstlisting}[caption={\textbf{Full-batch} \textsc{GCN} \citep{Kipf/Welling/2017} \textbf{model within} \textsc{PyTorch Geometric} \citep{Fey/Lenssen/2019}\textbf{.}},label=lst:code1]
from torch_geometric.nn import GCNConv

class GNN(Module):
    def __init__(self, in_channels, hidden_channels, out_channels, num_layers):
        super(GNN, self).__init__()

        self.convs = ModuleList()
        self.convs.append(GCNConv(in_channels, hidden_channels))
        for _ in range(num_layers - 2):
            self.convs.append(GCNConv(hidden_channels, hidden_channels))
        self.convs.append(GCNConv(hidden_channels, out_channels))

    def forward(self, x, adj_t):
        for conv in self.convs[:-1]:
            x = conv(x, adj_t).relu()
        return self.convs[-1](x, adj_t)
\end{lstlisting}

\begin{lstlisting}
from torch_geometric.nn import GCNConv
\end{lstlisting}
\vspace{-\baselineskip}
\begin{lstlisting}[backgroundcolor=\color{codebg2}]
from torch_geometric_autoscale import ScalableGNN
\end{lstlisting}
\vspace{-\baselineskip}
\begin{lstlisting}
w
\end{lstlisting}
\vspace{-\baselineskip}
\begin{lstlisting}[backgroundcolor=\color{codebg1}]
class GNN(ScalableGNN):
    def __init__(self, num_nodes, in_channels, hidden_channels, out_channels, num_layers):
        super(GNN, self).__init__(num_nodes, hidden_channels, num_layers)
\end{lstlisting}
\vspace{-\baselineskip}
\begin{lstlisting}

        self.convs = ModuleList()
        self.convs.append(GCNConv(in_channels, hidden_channels))
        for _ in range(num_layers - 2):
            self.convs.append(GCNConv(hidden_channels, hidden_channels))
        self.convs.append(GCNConv(hidden_channels, out_channels))
\end{lstlisting}
\vspace{-\baselineskip}
\begin{lstlisting}
w
\end{lstlisting}
\vspace{-\baselineskip}
\begin{lstlisting}[backgroundcolor=\color{codebg1}]
    def forward(self, x, adj_t, n_id):
        for conv, history in zip(self.convs[:-1], self.histories):
\end{lstlisting}
\vspace{-\baselineskip}
\begin{lstlisting}
            x = conv(x, adj_t).relu()
\end{lstlisting}
\vspace{-\baselineskip}
\begin{lstlisting}[backgroundcolor=\color{codebg2}]
            x = self.push_and_pull(history, x, n_id)
\end{lstlisting}
\vspace{-\baselineskip}
\begin{lstlisting}[caption={\textbf{Mini-batch} \textsc{GCN} \citep{Kipf/Welling/2017} \textbf{model within} \textsc{PyTorch Geometric} \citep{Fey/Lenssen/2019} \textbf{and our proposed \emph{PyGAS} framework.} \textcolor{codebg1}{$\blacksquare$} denotes lines that require changes, while \textcolor{codebg2}{$\blacksquare$} refers to newly added lines. Only minimal changes are required to auto-scale \textsc{GCN} (or any other model) to large graphs.},label=lst:code2]
        return self.convs[-1](x, adj_t)
\end{lstlisting}

\section{Addtional Ablation Studies}%
\label{sec:addtional_ablation_studies}

We report additional ablation studies to further strengthen the motivation of our GAS framework:

\begin{table}
  \centering
  \caption{%
    \textbf{Inter-/intra-connectivity ratio for real-world datasets with different mini-batch sampling strategies.}
    Utilizing \textsc{Metis} heavily minimizes inter-connectivity between mini-batches, which reduces history accesses and tightens approximation errors in return.
  }\label{tab:metis}
  \setlength{\tabcolsep}{5pt}
  \resizebox{0.8\linewidth}{!}{%
  \begin{tabular}{lrrrrrrrrrrrrrrr}
    \toprule
    \textbf{Sampling} & \mr{2}{\textsc{Cora}} & \mr{2}{\textsc{CiteSeer}} & \mr{2}{\textsc{PubMed}} & \mc{2}{r}{\textsc{Coauthor-~~~}} & \mc{2}{c}{\textsc{~~~~~~Amazon-}} & \mr{2}{\textsc{Wiki-CS}} \\
    \textbf{Scheme} & & & & \textsc{CS} & \textsc{Physics} & \textsc{Computer} & \textsc{Photo} \\
    \midrule
    Random         & 1.33 & 1.24 & 3.17 & 6.81 & 9.94 & 9.05 & 5.61 & 5.85 \\
    \textsc{Metis} & 0.14 & 0.02 & 0.52 & 2.77 & 2.26 & 2.27 & 1.03 & 1.12 \\
    \cmidrule{2-9}
    & \mr{2}{\textsc{Cluster}} & \mr{2}{\textsc{Pattern}} & \mr{2}{\textsc{Reddit}} & \mr{2}{\textsc{PPI}} & \mr{2}{\textsc{Flickr}} & \mr{2}{\textsc{Yelp}} & \texttt{ogbn-} & \texttt{ogbn-~} \\
    & & & & & & & \texttt{arxiv} & \texttt{products} \\
    \cmidrule{2-9}
    Random         & 36.64 & 51.02 & 6.58 & 6.79 & 1.82 & 6.74 & 3.02 & 26.18 \\
    \textsc{Metis} &  1.57 &  1.61 & 2.80 & 1.27 & 1.07 & 2.52 & 0.48 &  1.94 \\
    \bottomrule
  \end{tabular}}
\end{table}

\paragraph{Minimizing Inter-Connectivity Between Batches.}%

We make use of graph clustering methods \citep{Karypis/Kumar/1998,Dhillon/etal/2007} in order to minimize the inter-connectivity between batches, which minimizes history accesses and therefore increases closeness and reduces staleness in return.
To evaluate this impact in practice, Tabel~\ref{tab:metis} lists the inter-/intra-connectivity ratio of all real-world datasets used in our experiments, both for randomly sampled mini-batches as well as for utilizing \textsc{Metis} partitions as mini-batches.
Notably, applying \textsc{Metis} beforehand reduces the overall inter-/intra-connectivity ratio by a factor of 4 on average, which results in only a fraction of history accesses.
Furthermore, most real-world datasets come with inter-/intra-connectivity ratios between $0.1$ and $2.5$, leading to only marginal runtime overheads when leveraging historical information, as confirmed by our runtime analysis.

\begin{table}
  \centering
  \caption{%
    \textbf{Ablation study for a 4-layer} \textsc{GIN} \citep{Xu/etal/2019} \textbf{model on the} \textsc{Cluster} \textbf{dataset} \citep{Dwivedi/etal/2020}\textbf{.}
    Combining both GAS techniques help in resembling full-batch performance for expressive models with highly non-linear message passing phases.
  }\label{tab:ablation_study}
  \resizebox{0.65\linewidth}{!}{%
  \begin{tabular}{cccccc}
    \toprule
    & & & \mc{3}{c}{\textbf{Accuracy}} \\
    & & & Training & Validation & Test \\
    \midrule
    & \mc{2}{c}{\textbf{Full-batch Baseline}}                               & 60.49 & 58.17 & 58.49 \\
    \midrule
    & \textbf{Minimizing} & \textbf{Enforcing} \\
    & \textbf{Inter-Connectivity} & \textbf{Lipschitz Continuity} \\
    \midrule
    \mr{3}{\rotatebox{90}{\small{\textbf{GAS}}}}
    & \textcolor{codered}{\XSolidBrush} & \textcolor{codered}{\XSolidBrush} & 55.66 & 54.86 & 55.15 \\
    & \textcolor{green}{\CheckmarkBold} & \textcolor{codered}{\XSolidBrush} & 58.97 & 57.79 & 57.82 \\
    & \textcolor{green}{\CheckmarkBold} & \textcolor{green}{\CheckmarkBold} & \textbf{60.67} & \textbf{58.21} & \textbf{58.51} \\
    \bottomrule
  \end{tabular}}
\end{table}

\paragraph{Analysis of Gains for Obtaining Expressive Node Representations.}%

Next, we highlight the impacts of minimizing the inter-connectivity between mini-batches and enforcing Lipschitz continuity of the learned function in order to derive expressive node representations.
Here, we benchmark a 4-layer GIN model \citep{Xu/etal/2019} on the \textsc{Cluster} dataset \citep{Dwivedi/etal/2020}, \cf~Table~\ref{tab:ablation_study}.
Notably, both solutions achieve significant gains in training, validation and test performance, and together, they are able to closely resemble the performance of full-batch training.
However, we found that Lipschitz continuity regularization only helps in non-linear message passing phases, while it does not provide any additional gains for linear operators such as \textsc{GCN} \citep{Kipf/Welling/2017}.

\section{Datasets}%
\label{sec:datasets}

We give detailed statistics for all datasets used in our experiments, \cf~Table~\ref{tab:dataset_statistics}, which include the following tasks:

\begin{table}
  \centering
  \caption{%
    \textbf{Dataset statistics.}
  }\label{tab:dataset_statistics}
  \resizebox{0.85\linewidth}{!}{%
  \begin{tabular}{llcrrrrr}
    \toprule
    & \textbf{Dataset} & \textbf{Task} & \textbf{Nodes} & \textbf{Edges} & \textbf{Features} & \textbf{Classes} & \textbf{Label Rate} \\
    \midrule
    \mr{8}{\rotatebox{90}{\small{Small-scale}}}
    & \textsc{Cora} & multi-class & 2,708 & 5,278 & 1,433 & 7 & 5.17\% \\
    & \textsc{CiteSeer} & multi-class & 3,327 & 4,552 & 3,703 & 6 & 3.61\% \\
    & \textsc{PubMed} & multi-class & 19,717 & 44,324 & 500 & 3 & 0.30\% \\
    & \textsc{Coauthor-CS} & multi-class & 18,333 & 81,894 & 6,805 & 15 & 1.64\% \\
    & \textsc{Coauthor-Physics} & multi-class & 34,493 & 247,962 & 8,415 & 5 & 0.29\% \\
    & \textsc{Amazon-Computer} & multi-class & 13,752 & 245,861 & 767 & 10 & 1.45\% \\
    & \textsc{Amazon-Photo} & multi-class & 7,650 & 119,081 & 745 & 8 & 2.09\% \\
    & \textsc{Wiki-CS} & multi-class & 11,701 & 215,863 & 300 & 10 & 4.96\% \\
    \midrule
    \mr{7}{\rotatebox{90}{\small{Large-scale}}}
    & \textsc{Cluster} & multi-class & 1,406,436 & 25,810,340 & 6 & 6 & 83.35\% \\
    & \textsc{Reddit} & multi-class & 232,965 & 11,606,919 & 602 & 41 & 65.86\% \\
    & \textsc{PPI} & multi-label & 56,944 & 793,632 & 50 & 121 & 78.86\% \\
    & \textsc{Flickr} & multi-class & 89,250 & 449,878 & 500 & 7 & 50.00\%  \\
    & \textsc{Yelp} & multi-label & 716,847 & 6,977,409 & 300 & 100 & 75.00\% \\
    & \texttt{ogbn-arxiv} & multi-class & 169,343 & 1,157,799 & 128 & 40 & 53.70\% \\
    & \texttt{ogbn-products} & multi-class & 2,449,029 & 61,859,076 & 100 & 47 & 8.03\% \\
    \bottomrule
  \end{tabular}}
\end{table}

\begin{enumerate}
  \setlength\itemsep{0em}
  \item classifying academic papers in citation networks (\textsc{Cora}, \textsc{CiteSeer}, \textsc{PubMed}) \citep{Sen/etal/2008,Yang/etal/2016}
  \item categorizing computer science articles in Wikipedia graphs (\textsc{Wiki-CS}) \citep{Mernyei/Cangea/2020}
  \item predicting active research fields of authors in co-authorshop graphs (\textsc{Coauthor-CS}, \textsc{Coauthor-Physics}) \citep{Shchur/etal/2018}
  \item predicting product categories in co-purchase graphs (\textsc{Amazon-Computer}, \textsc{Amazon-Photo}) \citep{Shchur/etal/2018}
  \item identifying community clusters in Stochastic Block Models (\textsc{Cluster}, \textsc{Pattern}) \citep{Dwivedi/etal/2020}
  \item predicting communities of online posts based on user comments (\textsc{Reddit}) \citep{Hamilton/etal/2017}
  \item classifying protein functions based on the interactions of human tissue proteins (\textsc{PPI}) \citep{Hamilton/etal/2017}
  \item categorizing types of images based on their descriptions and properties (\textsc{Flickr}) \citep{Zeng/etal/2020a}
  \item classifying business types based on customers and friendship relations (\textsc{Yelp}) \citep{Zeng/etal/2020a}
  \item predicting subject areas of \textsc{arXiv} Computer Science papers (\texttt{ogbn-arxiv}) \citep{Hu/etal/2020}
  \item predicting product categories in an \textsc{Amazon} product co-purchasing network (\texttt{ogbn-products}) \citep{Hu/etal/2020}
\end{enumerate}

\end{document}